\def\BibTeX{{\rm B\kern-.05em{\sc i\kern-.025em b}\kern-.08em
    T\kern-.1667em\lower.7ex\hbox{E}\kern-.125emX}}
\def\BState{\State\hskip-\ALG@thistlm}
\newtheorem{defn}{Definition}
\newtheorem*{thm*}{Theorem}
\newtheorem{thm}{Theorem}
\newtheorem{lemma}{Lemma}
\newtheorem*{lemma*}{Lemma}
\newtheorem{corollary}{Corollary}
\newtheorem*{corollary*}{Corollary}
\newtheorem*{remark*}{Remark}
\newcommand{\com}[1]{{\color{red}(C: #1)}}
\newcommand{\com}[1]{}
\newcommand{\rev}[1]{{\color{blue}#1}}
\newcommand{\rev}[1]{#1}
\newcommand{\dtl}[1]{{\color{red}\\Details: #1}}
\newcommand{\dtl}[1]{}
\newcommand{\alt}[1]{{\color{red}\\Alternative: #1}}
\newcommand{\alt}[1]{}
\DeclareMathOperator*{\argmin}{arg\,min}
\DeclareMathOperator*{\argmax}{arg\,max}
\newcommand{\bs}{\boldsymbol}
\newcommand{\ssucc}{\overset{\ast}{\succ}}
\newcommand{\snsucc}{\overset{\ast}{\nsucc}}
\newcommand{\sprec}{\overset{\ast}{\prec}}
\newcommand{\snprec}{\overset{\ast}{\nprec}}
\renewcommand{\vec}{\boldsymbol}
\def\O{\mathcal{O}^\ast}
\def\Oh{\hat{\mathcal{O}}}
\def\P{\Pr}  
\def\E{\mathrm{E}}  
\icmltitlerunning{Combinatorial Multi-Objective Multi-Armed Bandit Problem}
\begin{document}

\twocolumn[
\icmltitle{Combinatorial Multi-Objective Multi-Armed Bandit Problem}



\icmlsetsymbol{equal}{*}

\begin{icmlauthorlist}
\icmlauthor{Doruk Öner}{bilkent}
\icmlauthor{Altuğ Karakurt}{osu}
\icmlauthor{Atilla Eryılmaz}{osu}
\icmlauthor{Cem Tekin}{bilkent}
\end{icmlauthorlist}

\icmlaffiliation{bilkent}{Department of Electrical and Electronics Engineering, Bilkent University, Turkey}
\icmlaffiliation{osu}{Department of Electrical and Computer Engineering, The Ohio State University, OH, USA}

\icmlcorrespondingauthor{Doruk Öner}{doruk.oner@ug.bilkent.edu.tr}
\icmlcorrespondingauthor{Altuğ Karakurt}{karakurt.1@osu.edu}
\icmlcorrespondingauthor{Atilla Eryılmaz}{eryilmaz.2@osu.edu}
\icmlcorrespondingauthor{Cem Tekin}{cemtekin@ee.bilkent.edu.tr}

\icmlkeywords{ICML, Machine Learning, Multi-Armed Bandits, Telecommunications, Pareto Efficient, Multi Objective Optimization}

\vskip 0.3in
]



\printAffiliationsAndNotice{}  

\begin{abstract}
In this paper, we introduce the \textit{COmbinatorial Multi-Objective Multi-Armed Bandit} (COMO-MAB) problem that captures the challenges of combinatorial and multi-objective online learning simultaneously. In this setting, the goal of the learner is to choose an action at each time, whose reward vector is a linear combination of the reward vectors of the arms in the action, to learn the set of {\em super} Pareto optimal actions, which includes the Pareto optimal actions and actions that become Pareto optimal after adding an arbitrary small positive number to their expected reward vectors. We define the Pareto regret performance metric and propose a fair learning algorithm whose Pareto regret is $O(N L^3 \log T)$, where $T$ is the time horizon, $N$ is the number of arms and $L$ is the maximum number of arms in an action. 
We show that COMO-MAB has a wide range of applications, including recommending bundles of items to users and network routing, and focus on a resource-allocation application for multi-user communication in the presence of multidimensional performance metrics, where we show that our algorithm outperforms existing MAB algorithms.
\end{abstract}

\section{Introduction}
In the classical MAB problem \cite{lai_robbins} there is a set of stochastic processes, termed arms, with unknown statistics. At each time step, the learner selects an arm and obtains a random reward that depends on the selected arm. The goal of the learner is to maximize its long-term reward by using the previous observations to predict arm rewards. The main challenge in this problem is to balance exploration and exploitation. The learner should exploit to maximize its immediate reward, while it should explore to form better estimates of the arm rewards. It is shown in many works  that the optimal learning policy strikes the balance between these two \cite{lai_robbins,ucb,survey}. 
The \textit{combinatorial multi-armed bandit} (C-MAB) is proposed as an extension to the MAB problem \cite{gai}, where the learner chooses a set of the available arms, termed action, at each time step, and then, observes both the reward of the action (which is a linear combination of the rewards of the arms in the action) and the individual rewards of each arm in the action. This new dimension in the problem formulation causes the naive learner to suffer from the curse of dimensionality. As a remedy, numerous learning algorithms that exploit correlations between the arms to learn faster have been developed \cite{gai,kveton_linear,chen}.

Another line of work focuses on extending optimization problems with multiple performance criteria to an online learning setting. This extension is called \textit{multi-objective multi-armed bandit} (MO-MAB) problem \cite{drugan13,drugan14,auer_pareto}, where each arm yields multiple rewards when chosen. Hence, the reward of an arm is modeled as a random vector. In this setting, the ordering of arms become ambiguous due to the multi-dimensional aspect of the problem. For this reason, the learning objective in this problem is usually defined to be the Pareto front, which consists of arms that are incomparable with each other in terms of the reward. This extension also poses significant challenges compared to the classical MAB problem, due to the presence of multiple (and possibly conflicting) objectives. While C-MAB and MO-MAB has been studied separately, to the best of our knowledge, there exists no prior work that considers them jointly. 

This paper aims to solve the two online learning challenges presented above together, by introducing a combinatorial online learning problem with multidimensional performance metrics. As a solution concept, we develop a novel MAB model, which we name as Combinatorial Multi-objective MAB (COMO-MAB). Essentially, COMO-MAB is a fusion of Combinatorial MAB (C-MAB) and Multi-objective MAB (MO-MAB). In COMO-MAB, the learner selects an action that consists of multiple arms, and receives a reward vector, which consists of a linear combination of the reward vectors of the arms that are in the selected action. The learner also gets to observe the reward vectors of the arms that are in the action. 

We first show that learning in COMO-MAB is more challenging than learning in the classical MAB, since there might be actions with zero Pareto suboptimality gap that are not in the Pareto front. This motivates us to define the {\em super Pareto front} (SPF) that extends the Pareto front to include actions that can become Pareto optimal by adding an arbitrary small positive value to their expected rewards vectors. Then, we define the Pareto regret, which measures the cumulative loss of the learner due to not selecting actions that lie in the SPF. In order to minimize the regret of the learner, we propose an upper confidence bound (UCB) based algorithm (COMO-UCB), and prove that its regret by time $T$ is $O(N L^3 \log T)$, where $N$ denotes the number of arms and $L$ denotes the maximum number of arms in an action. COMO-UCB is fair in the sense that at each time step, it selects an action from the estimated SPF uniformly at random. As we show in the numerical results, this lets COMO-UCB to achieve reasonably high rewards in all objectives, instead of favoring one objective over the other objectives.
Our regret analysis requires a new set of technical methods that includes application of a multi-dimensional version of Hoeffding's inequality, defining a multi-dimensional notion of suboptimality, and evaluating the regret of the selected action by comparing it with the subset of the SPF, which dominates the selected action the most.

Later, we show how three important multi-objective learning problems in multi-user communication, recommender systems and network routing can be modeled using COMO-MAB.
Finally, we demonstrate the performance and fairness of COMO-UCB in a practical multi-user communication setup, and show that it outperforms other state-of-the-art MAB algorithms.

\section{Related Work}\label{sec:related}
Most MAB algorithms learn using index policies based on upper confidence bounds. These policies use the mean estimates of arms and an inflation term to compute an index for each arm, and then, choose the arm with the maximum index at each decision epoch. The motivation for this is to behave optimistically in the face of uncertainity using the inflation terms, which encourage the algorithm to explore the under-sampled arms instead of choosing the arm with the maximum mean estimate at all times. This approach was first used in \citet{lai_robbins} to design asymptotically optimal learning algorithms. Later, \citet{ucb} proposed the celebrated UCB1 algorithm, whose indices are very simple to compute, and showed that UCB1 achieves logarithmic regret uniformly over time. Many variants of UCB based index policies have been proposed since then (see \citet{survey} and references therein). In our proposed solution to COMO-MAB problem, we also use UCB indices.

Many existing works on C-MAB propose solutions inspired by UCBs to solve C-MAB problem, such as \citet{dani2008, abbasi, gai, kveton_linear} where the reward of an action is a linear combination of involved arms, as well as specialized versions of C-MAB problem like matroid bandits \cite{kveton_matroid}, cascading bandits \cite{kveton_cascading} and C-MAB with probabilistically triggered arms \cite{chen}. Some of these works allow the learner to choose a fixed number of arms at any time, while other works generalize this approach by defining an action set that the learner chooses from, where the reward of each action is modeled as a general (possibly non-linear) function of the arm rewards. Another influential work on C-MAB \cite{cesabianchi} proposes a randomized algorithm inspired by GeometricHedge algorithm in \citet{dani2008price}.  
 
MO-MAB problem is studied through numerous different approaches. In \citet{bubeck}, each objective is considered as a different MAB problem and the aim is to find the optimal arm for each objective separately. Another line of work is interested in identifying the Pareto front of the arms \cite{drugan14, auer_pareto}, while another work aims to generalize the notion of regret in single-objective bandits to multi-objective bandits by defining the Pareto regret \cite{drugan13}. The former approach tries to maximize the probability of choosing a Pareto optimal arm, while the latter approach intends to minimize the Pareto regret. There also exists other variants of the MO-MAB problem such as the contextual MO-MAB \cite{tekin2017multi} and $\chi$-armed MO-MAB \cite{van2014multi}.
\section{Problem Description} \label{sec:problem}
In COMO-MAB, there exists $N$ arms indexed by the set ${\cal N} := [N]$, where $[N]$ denotes the set of positive integers from $1$ to $N$. The $D$-dimensional random reward vector of arm $i$ at time step $t$, denoted by $\bs{X}_i(t) := [X^{(1)}_i(t),\ldots,X^{(D)}_i(t)]$, is drawn from an unknown distribution with finite support, which is assumed to be the unit hypercube $[0,1]^D$ without loss of generality, independent of other time steps.\footnote{Independence is only required over time steps and not over different objectives.} Here, $X^{(j)}_i(t)$ denotes the random reward of arm $i$ in objective $j$, where the objectives are indexed by the set ${\cal D} := [D]$. The mean vector of arm $i$ is denoted by $\vec{\mu}_i := [ \mu^{(1)}_i, \ldots, \mu^{(D)}_i ]$, where $\mu^{(j)}_i := \mathbb{E} [X^{(j)}_i(t)]$.

We use ${\cal A}$ to denote the finite set of actions, where each action $\vec{a}$ is represented as an $N$-dimensional real-valued vector, i.e., $\vec{a} := (a_1, \ldots, a_N)$.
%
Moreover, for an action $\vec{a}$, we assume that $a_i \geq 0$ for all $i \in {\cal N}$. We say that arm $i$ is in action $\vec{a}$, if $a_i >0$. The set of arms in action $\vec{a}$ is given as ${\cal N}_{\text{nz}}(\vec{a}(t)) := \{ i \in {\cal N}: a_i(t) \neq 0 \}$, and the maximum number of arms in an action is given as $\textit{L} = \max_{\bs{a} \in {\cal A}}| {\cal N}_{\text{nz}}(\bs{a})|$.
The $D$-dimensional reward vector of action $\vec{a}$ in time step $t$ is given by $\vec{R}_{\vec{a}}(t) = \sum_{i=1}^n a_i \vec{X}_i(t)$,\footnote{In COMO-MAB, the same scalar $a_i$ multiplies the rewards of arm $i$ in all of the $D$ objectives when action $\vec{a}$ is selected. COMO-MAB can be generalized such that the multiplier for different objectives of the same arm becomes different. This can be achieved by defining $a_i$ as a $D$-dimensional vector. Our results can be extended to this case in a straightforward manner.} and its mean reward vector is given by $\vec{\mu}_{\vec{a}} = \sum_{i=1}^n a_i \vec{\mu}_i$. 
COMO-MAB can be used to model many applications that involve combinatorial action sets and multi-dimensional performance metrics such as multi-user communication, recommender systems and network routing (see Section \ref{sec:app} for a detailed discussion).

Since the action rewards are multi-dimensional in COMO-MAB, in order to compare different actions, one can think of using the notion of Pareto optimality. Identifying the set of arms in the Pareto front by using the sample mean estimates of the rewards can be challenging, since there might be actions not in the Pareto front for which the suboptimality gap is zero. This motivates us to define the notion of {\em super Pareto optimality} (SPO), which extends Pareto optimality in order to account for such actions.

\begin{defn}[SPO]
(i) An action $\vec{a}$ is \emph{weakly dominated} by action $\vec{a}'$, denoted by $\vec{\mu}_{\vec{a}} \preceq \vec{\mu}_{\vec{a}'}$ or $\vec{\mu}_{\vec{a}'} \succeq \vec{\mu}_{\vec{a}}$, if $\mu_{\vec{a}}^{(j)} \leq \mu_{\vec{a}'}^{(j)}, \forall j \in {\cal D}$. \\
(ii) An action $\vec{a}$ is \emph{dominated} by action $\vec{a}'$, denoted by $\vec{\mu}_{\vec{a}} \prec \vec{\mu}_{\vec{a}'}$ or $\vec{\mu}_{\vec{a}'} \succ \vec{\mu}_{\vec{a}}$, if it is weakly dominated and $\exists j \in {\cal D}$ such that $\mu_{\vec{a}}^{(j)} < \mu_{\vec{a}'}^{(j)}$. \\
(iii) An action $\vec{a}$ is \emph{super-dominated} by action $\vec{a}'$, denoted by $\vec{\mu}_{\vec{a}} \sprec \vec{\mu}_{\vec{a}'}$ or $\vec{\mu}_{\vec{a}'} \ssucc \vec{\mu}_{\vec{a}}$, if $\mu_{\vec{a}}^{(j)} < \mu_{\vec{a}'}^{(j)}, \forall j \in {\cal D}$. \\
(iv) Two actions $\vec{a}$ and $\vec{a}'$ are incomparable, denoted by $\vec{\mu}_{\vec{a}} || \vec{\mu}_{\vec{a}}$, if neither action super-dominates the other. \\
(v) An action is SPO if it is not super-dominated by any other action. The set of all SPO actions is called the SPF, and is denoted by $\O$.
\end{defn}

Note that SPO is a relaxed version of Pareto optimality. Every Pareto optimal action is also SPO. Moreover, an action that is not Pareto optimal can be SPO if adding any $\epsilon>0$ to the mean value of any dimension of the reward of that action makes it a Pareto optimal action.
\begin{remark*}
	The SPF $\neq$ the Pareto front happens in very specific problems that involve some kind of symmetry. For instance, given three actions with expected rewards $(2,1)$, $(1,2)$ and $(1,1)$, the Pareto front contains the first two actions, while the SPF contains all actions. However, such symmetric cases rarely exist in combinatorial problems of our interest.\footnote{We have not encountered such a case in our simulations.}
\end{remark*}

At time step $t$ the learner selects an action $\vec{a}(t) \in {\cal A}$, and receives the reward vector $\vec{R}_{\vec{a}(t)}(t)$. Then, at the end of time step $t$, it observes the reward vectors of the arms in ${\cal N}_{\text{nz}}(\vec{a}(t))$.
We measure the performance of the learner using the notion of {\em Pareto regret}, which is a generalization of the Pareto regret definition for the $K$-armed bandit problem \cite{drugan13} to our combinatorial setting. For this, we first define the {\em Pareto suboptimality gap} (PSG) of an action, which measures the distance between an action and the Pareto front. 

\begin{defn}[PSG]
	The PSG of an action $\vec{a} \in {\cal A}$, denoted by $\Delta_{\vec{a}}$, is defined as the minimum scalar $\epsilon \geq 0$ that needs to be added to all entries of $\vec{\mu}_{\vec{a}}$ such that $\vec{a}$ becomes a member of the SPF. More formally, 
	\begin{align*}
	\Delta_{\vec{a}} := \min_{\epsilon \geq 0} \epsilon ~~ \text{such that} ~~  (\vec{\mu}_{\bs{a}} + \bs{\epsilon}) \> || \> \vec{\mu}_{\vec{a}'}, \forall \vec{a}' \in \O 
	\end{align*}
	where $\bs{\epsilon}$ is a $D$-dimensional vector, whose all entries are $\epsilon$. In addition, we define the extrema of the PSG as follows:
$\Delta_{\max} := \max_{\vec{a} \in \mathcal{A}} \Delta_{\vec{a}}$ and 
$\Delta_{\min} := \min_{\vec{a} \in \mathcal{A} - {\cal O}^*} \Delta_{\vec{a}}$.
\end{defn}

Based on the definition of PSG, the {\em Pareto regret} (simply referred to as the {\em regret} hereafter) of the learner by time step $T$ is given as
\begin{align*}
\text{Reg}(T) := \sum_{t=1}^{T} \Delta_{\vec{a}(t)}.
\end{align*}
Note that $\Delta_{\vec{a}} > 0, \forall \vec{a} \notin \O$ and $\Delta_{\vec{a}} = 0, \forall \vec{a} \in \O$. Hence, the actions that are in the SPF but not in the Pareto front also have zero PSG, and their selection does not contribute to the regret.
In the following section, we propose a learning algorithm that minimizes $\mathbb{E} [\text{Reg}(T)]$, which also ensures that each action in the estimated SPF is selected with an equality probability.

\section{The Combinatorial Multi-Objective Upper Confidence Bound Algorithm}\label{sec:COM-UCB}

In this section we propose {\em COmbinatorial Multi-objective Upper Confidence Bound} (COMO-UCB) algorithm whose pseudocode is given in Algorithm 1.
\setlength{\textfloatsep}{3em}
\begin{algorithm}[h!] 
   \caption{COMO-UCB}
    \begin{algorithmic}
        \STATE // INITIALIZATION
        \STATE $\textit{L}$, $\hat{\bs{\mu}}_i = \bs{0}$, $m_i=0$, $\forall i \in {\cal N}$
        \FOR{$i = 1$ to $N$}
        	\STATE $t = i$
            \STATE Select $\bs{a}$ uniformly at random from $\{ \bs{a} \in {\cal A}: a_i \neq 0 \}$
            \STATE Collect reward $\vec{R}_{\vec{a}}(t)$
            \STATE Observe reward vectors $\bs{X}_i(t)$, $\forall i \in {\cal N}_{\text{nz}}(\bs{a})$
            \STATE $\hat{\bs{\mu}}_i = (\hat{\vec{\mu}}_i m_i + \bs{X}_i(t))/(m_i+1)$, $\forall i \in {\cal N}_{\text{nz}}(\bs{a})$
            \STATE $m_i = m_i + 1$, $\forall i \in {\cal N}_{\text{nz}}(\vec{a})$
        \ENDFOR
        \STATE // MAIN LOOP
        \WHILE{$1$}
        	\STATE $t = t + 1$
            \STATE Find the estimated SPF $\Oh$:
\begin{align}
\Oh = &\left\{ \vec{a} \in {\cal A} : \sum_{i \in {\cal N}} a_i \bigg(\hat{\vec{\mu}}_i + C_i(t) \bigg) \right. \notag \\ & \hspace{0.3in}  \left. \snprec \sum_{i \in {\cal N}} a'_i\bigg(\hat{\bs{\mu}}_i + C_i(t) \bigg), \forall \vec{a}' \in {\cal A} \right\}. \label{eqn:UCB}
\end{align}
            \STATE Select $\bs{a}$ uniformly at random from $\Oh$
            \STATE Collect reward $\vec{R}_{\vec{a}}(t)$
		    \STATE Observe reward vectors $\bs{X}_i(t)$, $\forall i \in {\cal N}_{\text{nz}}(\bs{a})$
			\STATE $\hat{\vec{\mu}}_i = (\hat{\vec{\mu}}_i m_i + \bs{X}_i(t))/(m_i+1)$, $\forall i \in {\cal N}_{\text{nz}}(\bs{a})$
			\STATE $m_i = m_i + 1$, $\forall i \in {\cal N}_{\text{nz}}(\bs{a})$
        \ENDWHILE
\end{algorithmic}
\end{algorithm}
For each arm $i \in {\cal N}$, COMO-UCB keeps two parameters that are updated at each time step: $\hat{\bs{\mu}}_i$ and $m_i$. The first one is the sample mean reward vector of arm $i$ and the second one is the number of time steps in which arm $i$ is selected. When the learner chooses an action $\bs{a}$ such that $i \in {\cal N}_{\text{nz}}(\bs{a})$, then $m_i$ is incremented by $1$. When explicitly referring to the value of these counters at the end of time step $t$ we use $\hat{\bs{\mu}}_i(t)$ and $m_i(t)$.\footnote{We adopt this convention for other variables that change over time as well.}

In the first $N$ time steps, the learner selects actions such that each arm gets selected at least once. This is done to ensure proper initialization of $\hat{\bs{\mu}}_i$, $i \in {\cal N}$. After this, the learner computes the estimated SPF, $\Oh$, at the beginning of each time step by using UCBs for the reward vectors of the arms as given in \eqref{eqn:UCB}, where $C_{i}(t) = \sqrt{\frac{(L+1)\log{((t-1)\sqrt[4]{D})}}{m_{i}(t)}}$, is the inflation term which serves as a proxy for the learner's uncertainty about the expected reward of arm $i$. This allows the learner to explore rarely selected arms, since it forces the actions that put large weights to the rarely selected arms to be in $\Oh$. In addition, the randomization in action selection ensures that the learner does not favor any action in $\Oh$. As we show in Section \ref{sec:app}, the randomization feature of COMO-UCB, which is not necessary for the classical MAB algorithms to minimize their regret, allows it to collect reasonably high rewards in all objectives, without favoring any of the objectives over others.
After an action is selected, the learner observes the reward vectors of the arms that have non-zero weights in the selected action, and updates the sample mean reward vector of these arms.
\section{Regret Analysis} \label{sec:regret}
In this section we bound the expected regret of COMO-UCB. The main result of this section is given in the following theorem (a more detailed version of the proof is given in the supplemental document). 
\begin{thm}\label{thm:mainregret}
When run with $C_{i}(t) = \sqrt{\frac{(L+1)\log{((t-1)\sqrt[4]{D})}}{m_{i}(t)}}$ the expected regret of COMO-UCB is bounded by
\begin{align*}
\mathbb{E} [\mathrm{Reg}(T)] &\leq \Delta_{\max}\Big( \frac{4a^2_{\max}NL^2(L+1)\log(T\sqrt[4]{D})}{\Delta_{\min}^2} \\ &\hspace{11em}+ N + \frac{\pi^2}{3}NL \Big)
\end{align*}
where $a_{\max} = \max_{\vec{a} \in \mathcal{A}} \{ \max_q a_q \}$.
\end{thm}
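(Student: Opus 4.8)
The plan is to follow the standard optimism-based template, adapted to the super-Pareto comparison. Since $\Delta_{\vec a}=0$ for every $\vec a \in \O$, I would first write $\mathrm{Reg}(T) \le \Delta_{\max}\big(N + \sum_{t=N+1}^{T}\mathbf{1}\{\vec a(t)\notin\O\}\big)$, where the leading $N$ accounts for the forced exploration in the initialization phase and the sum ranges over the main loop. It then remains to control the expected number of rounds in which a non-SPO action is played.

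The key structural step is to extract a deterministic gap from the event $\{\vec a(t)\notin\O\}$. For a fixed non-SPO action $\vec a$, let $\vec a^\ast \in \O$ be the SPF action that dominates $\vec a$ the most; by the definition of the PSG, $\vec a^\ast$ satisfies $\mu^{(j)}_{\vec a^\ast} - \mu^{(j)}_{\vec a} \ge \Delta_{\vec a} \ge \Delta_{\min}$ for every objective $j \in \mathcal D$. Define the clean event at time $t$ as the event that $|\hat\mu^{(j)}_i(t)-\mu^{(j)}_i| \le C_i(t)$ for all arms $i$ in $\vec a(t)\cup\vec a^\ast$ and all $j$. On this event each inflated index is sandwiched between the arm's mean and its mean plus $2C_i(t)$, so $\sum_i a^\ast_i(\hat{\vec\mu}_i + C_i(t))$ dominates $\vec\mu_{\vec a^\ast}$ coordinate-wise from below, while $\sum_i a_i(\hat{\vec\mu}_i + C_i(t))$ is at most $\vec\mu_{\vec a} + 2\sum_i a_i C_i(t)$ coordinate-wise. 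Now $\vec a(t)\in\Oh$ means $\vec a(t)$ is not super-dominated by $\vec a^\ast$, i.e.\ there is an objective $j$ in which its inflated index is at least that of $\vec a^\ast$; chaining the two sandwich bounds through this coordinate yields $2\sum_{i} a_i C_i(t) \ge \Delta_{\min}$. Bounding $\sum_i a_i C_i(t) \le a_{\max} L \max_i C_i(t)$ over the at most $L$ arms in the action forces at least one arm $i$ in $\vec a(t)$ to satisfy $C_i(t) \ge \Delta_{\min}/(2La_{\max})$, equivalently $m_i(t) \le 4a_{\max}^2 L^2(L+1)\log((t-1)\sqrt[4]{D})/\Delta_{\min}^2 =: \ell$.

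With this implication in hand I would carry out a combinatorial counting argument: assign to each clean-event round in which a non-SPO action is played a single \emph{witness} arm $i$ with $m_i(t)\le\ell$. Because a witness arm is also played (so $m_i$ increments) and $m_i(t)$ is nondecreasing, each arm can serve as witness at most $\ell$ times, giving at most $N\ell$ such rounds and hence the leading term $\Delta_{\max}\cdot 4a_{\max}^2 N L^2(L+1)\log(T\sqrt[4]{D})/\Delta_{\min}^2$. The complementary contribution comes from rounds in which the clean event fails; here I would apply a multi-dimensional version of Hoeffding's inequality, noting that conditioned on $m_i(t)$ the per-objective deviation probability equals $2((t-1)\sqrt[4]{D})^{-2(L+1)}$, and that the factor $\sqrt[4]{D}$ is precisely what absorbs the union bound over the $D$ objectives. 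Summing $\sum_{t}(t-1)^{-2}\le\pi^2/6$ over the arms, with the extra factor $L$ tracked through the union bound over the arms in an action, produces the $\frac{\pi^2}{3}NL$ term. Combining the three contributions with the initialization term $N$ gives the stated bound after multiplying by $\Delta_{\max}$.

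I expect the main obstacle to be the structural step rather than the counting. In single-objective UCB, playing a suboptimal arm immediately exhibits a one-dimensional index inequality; here "not super-dominated" only guarantees a single favorable objective $j$, so the whole argument hinges on choosing $\vec a^\ast$ as the SPF action dominating $\vec a$ the most and on the geometric fact that this choice yields a uniform gap of $\Delta_{\vec a}$ in every coordinate. Verifying this property of the PSG, and ensuring the clean event is defined over the correct set of arms (those in both $\vec a(t)$ and $\vec a^\ast$), are the delicate points; the remaining inflation-radius bookkeeping and the role of the constant $(L+1)$ inside $C_i(t)$, which is exactly what makes the failure series summable, is then routine.
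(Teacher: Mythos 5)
Your proposal is correct and follows essentially the same route as the paper's proof: the paper likewise compares $\vec{a}(t)$ to the SPF action that super-dominates it the most (using exactly your geometric fact that this action exceeds $\vec{\mu}_{\vec{a}(t)}$ by at least $\Delta_{\vec{a}(t)} \geq \Delta_{\min}$ in \emph{every} coordinate), splits the selection event into the same clean/failure events handled by the multi-dimensional Hoeffding lemma, and counts suboptimal rounds through per-arm auxiliary counters $\tilde{T}_i$ (incremented at the least-played arm of the chosen action), which play precisely the role of your witness arms and yield the same three terms of the bound. The one phrase to repair is ``conditioned on $m_i(t)$'': since $m_i(t)$ is history-dependent, Hoeffding cannot be applied conditionally on it, and the paper instead takes a union bound over all possible sample-count values (its nested sums over $c_{h_q}$ and $d_{k_q}$), contributing the $t^{2L}$ factor that the $(L+1)$ exponent in $C_i(t)$ is designed to absorb --- which is exactly what your own arithmetic (turning $(t-1)^{-2(L+1)}$ into a summable $(t-1)^{-2}$ series) implicitly assumes, so this is a wording fix rather than a structural gap.
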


\textit{Proof.} We first state a version of Hoeffding's inequality adapted to multiple dimensions.

\begin{lemma}\label{lemma:hoeff} \cite{drugan13}
Let $\vec{\mu}$ be the mean vector of a $D$-dimensional i.i.d. process with support $[0,1]^D$ and $\hat{\vec{\mu}}_n$ denote the sample mean estimate of $\vec{\mu}$ based on $n$ observations. Then, for any $k \in \mathbb{R}_+$
$\Pr (\vec{\mu} + k \snsucc \hat{\vec{\mu}}_n) \leq De^{-2nk^2}$ and 
$\Pr (\vec{\mu} - k \snprec \hat{\vec{\mu}}_n) \leq De^{-2nk^2}$.
\end{lemma}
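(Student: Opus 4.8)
The plan is to reduce this $D$-dimensional statement to $D$ separate applications of the classical one-dimensional Hoeffding inequality, combined through a union bound. The first step is to decode the super-dominance notation coordinate-wise. Recall from the SPO definition that $\vec{\mu} + k \ssucc \hat{\vec{\mu}}_n$ holds precisely when $\mu^{(j)} + k > \hat{\mu}_n^{(j)}$ for every $j \in {\cal D}$. Hence its negation is the event
\begin{align*}
\{ \vec{\mu} + k \snsucc \hat{\vec{\mu}}_n \} = \bigcup_{j \in {\cal D}} \{ \hat{\mu}_n^{(j)} - \mu^{(j)} \geq k \},
\end{align*}
i.e., the sample mean overshoots the true mean by at least $k$ in at least one objective. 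Symmetrically, decoding the second relation gives $\{ \vec{\mu} - k \snprec \hat{\vec{\mu}}_n \} = \bigcup_{j \in {\cal D}} \{ \mu^{(j)} - \hat{\mu}_n^{(j)} \geq k \}$, the event that the sample mean undershoots by at least $k$ somewhere.

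Next I would apply the union bound to each of these two events, bounding the probability of the union by the sum of the $D$ per-coordinate probabilities. For a fixed objective $j$, the estimate $\hat{\mu}_n^{(j)}$ is the average of $n$ independent draws of $X^{(j)}$, each supported on $[0,1]$; the independence across the $n$ observations is exactly the i.i.d.-over-time assumption in the problem setup. Crucially, the union bound tolerates arbitrary dependence between the coordinates, so no independence across the $D$ objectives is required---consistent with the footnote stating that independence is assumed only over time. The one-sided classical Hoeffding inequality then yields, for each $j$, both $\Pr(\hat{\mu}_n^{(j)} - \mu^{(j)} \geq k) \leq e^{-2nk^2}$ and $\Pr(\mu^{(j)} - \hat{\mu}_n^{(j)} \geq k) \leq e^{-2nk^2}$.

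Summing these $D$ identical bounds produces the factor $D e^{-2nk^2}$ for both events, which is exactly the claimed pair of inequalities. There is essentially no analytic obstacle: the entire content lies in correctly translating the two negated super-dominance relations into one-sided coordinate deviations and in noting that the union bound needs no inter-objective independence. The only point that genuinely demands care is matching the direction of each one-sided Hoeffding bound to the correct event, so that the first inequality controls upward deviations of the sample mean while the second controls downward deviations; getting this orientation wrong is the one place a routine argument could slip.
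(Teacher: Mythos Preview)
Your proposal is correct and matches the paper's own argument essentially line for line: decompose the negated super-dominance relation as a union of coordinate-wise deviation events, apply the union bound, and invoke the one-sided scalar Hoeffding inequality on each of the $D$ coordinates. Your remark that no inter-objective independence is needed (only i.i.d.\ over time) is also consistent with the paper's setup.
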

\dtl{ 
	\begin{proof}
\begin{align*}
\Pr (\vec{\mu} + k \snsucc \hat{\vec{\mu}}_n) 
&= \Pr \Bigg( \bigcup_{j = 1}^D 
\left\{ \mu^{(j)} + k \leq \hat{\mu}_n^{(j)} \right\} \Bigg) \\
&\leq \sum_{i=1}^D \Pr (\mu^{(j)} + k \leq \hat{\mu}_n^{(j)}) \\ 
&\leq \sum_{i=1}^D e^{-2nk^2} = De^{-2nk^2}
\end{align*}
where we use Hoeffding's inequality to obtain the last inequality.
Similarly, the reverse direction of Hoeffding's inequality is the following.
\begin{align}\label{ch2}
\Pr (\mu^{(j)} - k \geq \hat{\mu}_n^{(j)}) \leq e^{-2nk^2}.
\end{align}
As before, we decompose Pareto dominance relationship into a union of events and use (\ref{ch2}) with union bound.
\begin{align*}
\Pr (\vec{\mu} - k \snprec \hat{\vec{\mu}}_n) 
&= \Pr \Bigg( \bigcup_{j = 1}^D 
\left\{ \mu^{(j)} - k \geq \hat{\mu}_n^{(j)} \right\} \Bigg) \\
&\leq \sum_{i=1}^D \Pr (\mu^{(j)} - k \geq \hat{\mu}_n^{(j)}) \\ 
&\leq \sum_{i=1}^D e^{-2nk^2} = De^{-2nk^2}.
\end{align*}
\end{proof}
}

Let $T_{\bs{a}}(t)$ denote the number of times action $\bs{a}$ is selected in the first $t$ time steps. Next, we define a set of auxiliary counters, denoted by $\tilde{\bs{T}}(t) := \{ \tilde{T}_i(t) \}_{i \in {\cal N}}$, that will be used in the regret analysis. We borrow the idea of using such auxiliary counters from \citet{gai}, which used these counters for analyzing the regret of a single-objective combinatorial bandit problem. However, our analysis is significantly different from the analysis in \citet{gai} due to the fact that the definition of suboptimality is very different for the multi-objective setting.

Let $\tilde{\bs{T}}(t)$ be defined for $t > N$ such that if an action in $\O$ is selected in time step $t$,\footnote{$\tilde{\bs{T}}(t)$ is equal to the zero vector for $t \leq N$.} then $\tilde{\bs{T}}(t) = \tilde{\bs{T}}(t-1)$, while if an action not in  $\O$ is selected in time step $t$, then $\tilde{T}_{i^\ast(t)}(t) = \tilde{T}_{i^\ast(t)}(t-1)+1$ for $i^\ast(t) = \argmin_{i \in {\cal N}_{\text{nz}}(\bs{a}(t))} m_i(t)$ and $\tilde{T}_{i}(t) = \tilde{T}_{i}(t-1)$ for $i \neq i^\ast(t)$.\footnote{In case $\argmin_{i \in {\cal N}_{\text{nz}}(\bs{a}(t))} m_i(t)$ contains multiple elements, an arbitrary element is selected to be $i^\ast(t)$.}
Since exactly one element of $\tilde{\bs{T}}(t)$ is incremented by $1$ in every time step in which a suboptimal action is selected, we have
$\sum_{\bs{a} : \Delta_{\bs{a}} >0 } T_{\bs{a}}(t) = \sum_{i \in {\cal N}} \tilde{T}_{i}(t)$. Thus,
\begin{align}\label{t_change}
\sum_{\bs{a} \notin \O} \mathbb{E} [T_{\bs{a}}(t)] = \sum_{i = 1}^N \mathbb{E} [\tilde{T}_{i}(t)].
\end{align}
In addition, we have by definition
$\tilde{T}_{i}(t) \leq m_i(t) \text{, } \forall i \in {\cal N}$.
We also have
\begin{align}
&\tilde{T}_i(T) 
= \sum_{t = N+1}^T \mathbbm{1}\{ i^*(t) = i, \bs{a}(t) \notin \O \} \notag \\
&\leq l + \sum_{t = N}^{T-1} \mathbbm{1}\{   l \leq m_h(t), \forall h \in {\cal N}_{\text{nz}}(\bs{a}(t+1)), \notag \\ 
&\hspace{14em}~\vec{a}(t+1) \not \in \O   \} 
\label{eqn:a1}
\end{align}
\dtl{Note that 
\begin{align*}
& \Bigg\{  i^*(t+1) = i,~ \tilde{T}_i(t) \geq l,~ \vec{a}(t+1) \not \in \O \Bigg\} \\
& \subseteq \Bigg\{  \tilde{T}_i(t+1) \geq l+1,~ i^\ast(t+1) = i, ~\vec{a}(t+1) \not \in \O  \Bigg\} \\
& \subseteq \Bigg\{  l+1 \leq m_h(t+1), \forall h \in {\cal N}_{\text{nz}}(\bs{a}(t+1)), ~\vec{a}(t+1) \not \in \O  \Bigg\} \\
& \subseteq \Bigg\{  l \leq m_h(t), \forall h \in {\cal N}_{\text{nz}}(\bs{a}(t+1)), ~\vec{a}(t+1) \not \in \O \Bigg\}.
\end{align*}
The first equality holds because when $i^\ast(t+1)=i$ and $\bs{a}(t+1) \notin \O$, we have $\tilde{T}_i(t+1)=\tilde{T}_i(t)+1$.}
where $\mathbbm{1}(\cdot)$ denotes the indicator function.
Let $C_{t,m} := \sqrt\frac{(L+1)\log(t \sqrt[4]{D})}{m}$ and let
$\bar{\vec{\mu}}_{i,m}$ be the random vector that denotes the sample mean vector of $m$ reward vector observations from arm $i$. 

Let $\O_{\vec{a}(t)}$ denote the subset of SPF that super dominates $\vec{a}(t)$. If $\vec{a}(t) \not \in \O$, then this set is non-empty. Next, we define $\vec{a}'(t)$ as the action that dominates $\vec{a}(t)$ the most, which is given as
\begin{numcases}{\vec{a}'(t)=\hspace{-0.5em}}
  \argmax_{\vec{a}^\ast \in \O_{\vec{a}(t)}} 
  \left\{ \min_{1 \leq j \leq D} (\mu_{\vec{a^\ast}}^{(j)} - \mu_{\vec{a}(t)}^{(j)}) \right\} \text{ }& \hspace{-2.5em}if $\vec{a}(t) \not\in \O$\nonumber \\
  \hspace{11.5em}\vec{a}(t), &\hspace{-2.5em} otherwise. \nonumber
\end{numcases}

Next, we continue upper bounding \eqref{eqn:a1}:
\begin{align}
&\tilde{T}_i(T) \notag \\
&\leq l + \sum_{t = N}^{T-1} \mathbbm{1} 
\Bigg\{ \sum_{h \in {\cal N}_{\text{nz}}(\vec{a}(t+1))} \hspace{-1em}a_{h}(t+1)
 (\bar{\vec{\mu}}_{h, m_h(t)} + C_{t, m_h(t)}) \notag \\ &\hspace{4em} \snprec 
 \sum_{k \in {\cal N}_{\text{nz}}(\vec{a}'(t+1))}\hspace{-1em}  a'_k(t+1) (\bar{\vec{\mu}}_{k, m_k(t)} + C_{t, m_k(t)}), \notag \\ &\hspace{7em}\text{ }
 l \leq m_h(t), \forall h \in {\cal N}_{\text{nz}}(\bs{a}(t+1)) \Bigg\} . \label{eqn:proof2}
\end{align}

The following is an upper bound on (\ref{eqn:proof2}):
\begin{align}\label{last_bound}
\begin{split}
\tilde{T}_i(T) \leq &l + \sum_{t=N}^{T-1} \sum_{c_{h_1}=l}^{t}\sum_{c_{h_2}=l}^{t} \cdots \sum_{c_{h_{|{\cal N}_{\text{nz}}(\vec{a}(t+1))|}}=l}^{t} \\ &\sum_{d_{k_1}=1}^{t} \sum_{d_{k_2}=1}^{t} \cdots\sum_{d_{k_{|{\cal N}_{\text{nz}}(\vec{a'}(t+1))|}}=1}^{t}\\&\mathbbm{1}\Bigg\{ \sum_{q = 1}^{|{\cal N}_{\text{nz}}(\vec{a}(t+1))|} \hspace{-1em}a_{h_q}(t+1)(\bar{\vec{\mu}}_{h_q, c_{h_q}} + C_{t, c_{h_q}})\\&\snprec\hspace{-1em} \sum_{q = 1}^{|{\cal N}_{\text{nz}}(\vec{a'}(t+1))|}\hspace{-1em} a'_{k_q}(t+1)(\bar{\vec{\mu}}_{k_q, d_{k_q}} + C_{t, d_{k_q}}) \Bigg\}
\end{split}
\end{align}
where $h_q$ denotes the $q$th element of ${\cal N}_{\text{nz}}(\vec{a}(t+1))$ and $k_q$ denotes the $q$th element of ${\cal N}_{\text{nz}}(\vec{a}'(t+1))$.
The event inside the indicator function in (\ref{last_bound}) is true only if at least one of the following events occur:

\begin{align}\label{event1}
\begin{split}
&\mathcal{E}_1 := \Bigg\{ \sum_{q = 1}^{|\mathcal{N}_{nz}(\vec{a}'(t+1))|}\hspace{-1em} a'_{k_q}(t+1)\bar{\vec{\mu}}_{k_q, d_{k_q}} \snsucc \vec{\mu}_{\vec{a}'(t+1)} \\ &\hspace{3em}- \sum_{q = 1}^{|\mathcal{N}_{nz}(\vec{a}'(t+1))|}\hspace{-1em} a'_{k_q}(t+1)C_{t,d_{k_q}} \Bigg\}
\end{split}
\end{align}
\begin{align}\label{event2}
\begin{split}
&\mathcal{E}_2 := \Bigg\{ \vec{\mu}_{\vec{a}(t+1)} + \sum_{q = 1}^{|\mathcal{N}_{nz}(\vec{a}(t+1))|}\hspace{-1em} a_{h_q}(t+1) C_{t, c_{h_q}} \\ &\hspace{3em}\snsucc \sum_{q = 1}^{|\mathcal{N}_{nz}(\vec{a}(t+1))|}\hspace{-1em} a_{h_q}(t+1) \bar{\vec{\mu}}_{h_q, c_{h_q}} \Bigg\}
\end{split}
\end{align}
\begin{align}\label{event3}
&\mathcal{E}_3 := \Bigg\{\vec{\mu}_{\vec{a}'(t+1)} \nsucceq \vec{\mu}_{\vec{a}(t+1)} + 2\hspace{-1.5em}\sum_{q = 1}^{|\mathcal{N}_{nz}(\vec{a}(t+1))|}\hspace{-1.5em} a_{h_q}(t+1) C_{t, c_{h_q}}\Bigg\}.
\end{align}
\dtl{
Suppose that none of the events in (\ref{event1}), (\ref{event2}) and (\ref{event3}) are true, which implies that
\begin{align}\label{h1}
\begin{split}
&\sum_{q = 1}^{|\mathcal{N}_{nz}(\vec{a}'(t+1))|} \hspace{-1em}a'_{k_q}(t+1)\bar{\vec{\mu}}_{k_q, d_{k_q}} + \sum_{q = 1}^{|\mathcal{N}_{nz}(\vec{a}'(t+1))|} \hspace{-1em}a'_{k_q}(t+1)C_{t,d_{k_q}} \\ &\hspace{12em}\ssucc \vec{\mu}_{\vec{a}'(t+1)}
\end{split}
\end{align}
\begin{align}\label{h2}
\begin{split}
&\vec{\mu}_{\vec{a}(t+1)} + \sum_{q = 1}^{|\mathcal{N}_{nz}(\vec{a}(t+1))|} \hspace{-1em}a_{h_q}(t+1) C_{t, c_{h_q}} \\ &\hspace{3em}\ssucc \sum_{q = 1}^{|\mathcal{N}_{nz}(\vec{a}(t+1))|} \hspace{-1em}a_{h_q}(t+1) \bar{\vec{\mu}}_{h_q, c_{h_q}}
\end{split}
\end{align}
\begin{align}\label{h3}
\begin{split}
\vec{\mu}_{\vec{a}'(t+1)} \succeq \vec{\mu}_{\vec{a}(t+1)} + 2\sum_{q = 1}^{|\mathcal{N}_{nz}(\vec{a}(t+1))|} \hspace{-1em}a_{h_q}(t+1) C_{t, c_{h_q}}.
\end{split}
\end{align}
Now adding the term $\sum_{q = 1}^{|\mathcal{N}_{nz}(\vec{a}(t+1))|} a_{h_q}(t+1) C_{t, c_{h_q}}$ to both sides of (\ref{h2}) and combining the expressions in (\ref{h2}) and (\ref{h3}) together, we obtain the following:
\begin{align*}
&\vec{\mu}_{\vec{a}'(t+1)} \succeq \vec{\mu}_{\vec{a}(t+1)} + 2\hspace{-1em}\sum_{q = 1}^{|\mathcal{N}_{nz}(\vec{a}(t+1))|}\hspace{-1em} a_{h_q}(t+1) C_{t, c_{h_q}} \\ &\hspace{3.5em}\ssucc \sum_{q = 1}^{|\mathcal{N}_{nz}(\vec{a}(t+1))|}\hspace{-1em} a_{h_q}(t+1)\bar{\vec{\mu}}_{h_q, c_{h_q}} \\ &\hspace{8em}+\sum_{q = 1}^{|\mathcal{N}_{nz}(\vec{a}(t+1))|}\hspace{-1em} a_{h_q}(t+1) C_{t, c_{h_q}}.
\end{align*}

This equation together with (\ref{h1}) yields the following, which makes the term in the indicator function of (\ref{last_bound}) false.

\begin{align*}
&\sum_{q = 1}^{|\mathcal{N}_{nz}(\vec{a}'(t+1))|}\hspace{-1em} a'_{k_q}(t+1)(\bar{\vec{\mu}}_{k_q, d_{k_q}} + C_{t,d_{k_q}}) \\ &\hspace{5em}\ssucc \sum_{q = 1}^{|\mathcal{N}_{nz}(\vec{a}(t+1))|}\hspace{-1em} a_{h_q}(t+1)(\bar{\vec{\mu}}_{h_q, c_{h_q}}+C_{t, c_{h_q}}).
\end{align*}
}

Next, we continue by bounding the probabilities of the events given in
(\ref{event1}), (\ref{event2}) and (\ref{event3}).  
For (\ref{event1}), we have
\begin{align*} 
\P(\mathcal{E}_1) \leq &\P \Bigg(\bigcup_{q = 1}^{|\mathcal{N}_{nz}(\vec{a}'(t+1))|}\hspace{-1em} \left\{a'_{k_q}(t+1) \bar{\vec{\mu}}_{k_q, d_{k_q}} \right.
\\& \left. \hspace{5em}\snsucc a'_{k_q}(t+1)(\vec{\mu}_{k_q} - C_{t, d_{k_q}}) \right\} \Bigg) \\
&\leq \sum_{q=1}^{|\mathcal{N}_{nz}(\vec{a}'(t+1))|} \hspace{-1em}\P\Big( \bar{\vec{\mu}}_{k_q, d_{k_q}} \snsucc \vec{\mu}_{k_q} - C_{t,d_{k_q}}\Big).
\end{align*}
\dtl{
Assume that $A = \sum_{q} w_q A_q$ and $B = \sum_{q} w_q B_q$ where $w_q \geq 0$. Then, $A \snsucc B$ implies that $\exists q: A_q \snsucc B_q$. This holds because if $A_q \ssucc B_q$ for all $q$, then we must have $A \ssucc B$.}

Using the multi-dimensional Hoeffding's inequality (Lemma \ref{lemma:hoeff}), we obtain
\begin{align*}
&\P\Big( \bar{\vec{\mu}}_{k_q, d_{k_q}} \snsucc \vec{\mu}_{k_q} - C_{t,d_{k_q}} \Big)\leq De^{-2C^2_{t, d_{k_q}}d_{k_q}} \\ &\hspace{3em}=De^{-2(L+1)\log(t\sqrt[4]{D})} \leq t^{-2(L+1)}.
\end{align*}
Hence, the sum of $|\mathcal{N}_{nz}(\vec{a}'(t+1))|$ such probabilities yield:
\begin{align*}
&\P(\mathcal{E}_1) \leq |\mathcal{N}_{nz}(\vec{a}'(t+1))|t^{-2(L+1)} \leq Lt^{-2(L+1)}.
\end{align*}
\dtl{Similarly, for (\ref{event2}), we have
\begin{align*}
&\P\Bigg( \vec{\mu}_{\vec{a}(t+1)} + \sum_{q = 1}^{|\mathcal{N}_{nz}(\vec{a}(t+1))|} \hspace{-1em}a_{h_q}(t+1) C_{t, c_{h_q}} \\ &\hspace{9em}\snsucc \sum_{q = 1}^{|\mathcal{N}_{nz}(\vec{a}(t+1))|} \hspace{-1em}a_{h_q}(t+1) \bar{\vec{\mu}}_{h_q, c_{h_q}} \Bigg) \\
&\leq \sum_{q=1}^{|\mathcal{N}_{nz}(\vec{a}(t+1))|} \hspace{-1em}\P\left( \rev{\vec{\mu}_{h_q}} + C_{t,c_{h_q}} \snsucc \bar{\vec{\mu}}_{h_q, \rev{c_{h_q}}} \right).
\end{align*}}
Similarly, for (\ref{event2}), we have $\P(\mathcal{E}_2) \leq Lt^{-2(L+1)}$.

Finally, we bound the probability of (\ref{event3}).
Observe that for $l \geq \bigg\lceil \frac{4a^2_{max}L^2(L+1)\log(T\sqrt[4]{D})}{\Delta_{\vec{a}(t+1)}^2} \bigg\rceil$, we have the following:
\begin{align*}
&\vec{\mu}_{\vec{a}'(t+1)} -\vec{\mu}_{\vec{a}(t+1)} - 2\hspace{-1em}\sum_{q = 1}^{|\mathcal{N}_{nz}(\vec{a}(t+1))|}\hspace{-1em}a_{h_q}(t+1) C_{t, c_{h_q}} \\ 
&\hspace{0.5em}\geq \vec{\mu}_{\vec{a}'(t+1)} -\vec{\mu}_{\vec{a}(t+1)} - La_{max}\sqrt{\frac{4(L+1)\log(T\sqrt[4]{D})}{l}} \\
&\hspace{0.5em}\geq \vec{\mu}_{\vec{a}'(t+1)} -\vec{\mu}_{\vec{a}(t+1)} - \Delta_{\vec{a}(t+1)} \geq \vec{0}.
\end{align*}
The last expression above implies that (\ref{event3}) is false as long as $l$ is at least as large as the given bound.
Therefore, by setting $l = \bigg\lceil \frac{4a^2_{\max}L^2(L+1)\log(T\sqrt[4]{D})}{\Delta_{\min}^2} \bigg\rceil$, we make the probability of this event zero.
Combining our results and plugging them in (\ref{last_bound}) we obtain the following:
\begin{align*}
\E[\tilde{T}_i(T)] &\leq \bigg\lceil \frac{4a^2_{\max}L^2(L+1)\log(T\sqrt[4]{D})}{\Delta_{\min}^2} \bigg\rceil \\ &\hspace{-0.5em}+ \sum_{t=1}^\infty \sum_{c_{h_1}=l}^{t}\sum_{c_{h_2}=l}^{t} \cdots \sum_{c_{h_{|{\cal N}_{\text{nz}}(\vec{a}(t+1))|}}=l}^{t} \sum_{d_{k_1}=1}^{t}\\ &\hspace{2.5em}\sum_{d_{k_2}=1}^{t} \cdots \sum_{d_{k_{|{\cal N}_{\text{nz}}(\rev{\vec{a'(t+1)}})|}}=1}^{t} 2Lt^{-2(L+1)} \\
&\hspace{-0.5em}\leq \frac{4a^2_{\max}L^2(L+1)\log(T\sqrt[4]{D})}{\Delta_{\min}^2} + 1 + 2L\sum_{t=1}^\infty \frac{1}{t^{2}} \\ &\hspace{-0.5em}\leq \frac{4a^2_{\max}L^2(L+1)\log(T\sqrt[4]{D})}{\Delta_{\min}^2} + 1 + \frac{\pi^2}{3}L.
\end{align*}
Finally, we use this result and (\ref{t_change}) to bound the expected regret of COMO-UCB.   $\hfill \qed$

\dtl{\begin{align*}
\E[\text{Reg}(T)] &\leq \Delta_{\max} \hspace{-1em}\sum_{\vec{a} \in \mathcal{A} - \O}\hspace{-1em} \E[T_{\vec{a}}(T)] \Delta_{\max} \sum_{i = 1}^N \E[\tilde{T}_{i}(T)] \\
&\leq \Delta_{\max}\Big( \frac{4a^2_{\max}NL^2(L+1)\log(T\sqrt[4]{D})}{\Delta_{\min}^2} \\ &\hspace{9em}+ N + \frac{\pi^2}{3}NL \Big). \hfill \qed
\end{align*}}

From Theorem \ref{thm:mainregret} we conclude that the expected regret of COMO-UCB is $O(N L^3 \log T)$. Moreover, $D$ affects the regret indirectly through $\Delta_{\max}$ and $\Delta_{\min}$ and directly through the term inside the logarithm. As we will show in Section \ref{sec:app}, COMO-UCB provides significant performance improvement over naive MO-MAB algorithms when $N L^3$ is much smaller than the number of actions.

\ifodd 2 
\begin{thm}
When run with $$C_{i}(t) = \sqrt{\frac{2\Big(1+2\log \big( D|\mathcal{A}|T \big)\Big)}{m_{i}(t)}},$$ the expected regret of COMO-UCB is bounded by
\begin{align*}
\mathbb{E} [\mathrm{Reg}(T)] &\leq 4|\mathcal{A}|L\sqrt{T} \sqrt{2(1+2\log(D|\mathcal{A}|T))} + \Delta_{max}
\end{align*}
\end{thm}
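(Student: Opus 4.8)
The plan is to turn the gap-dependent counting of Theorem~\ref{thm:mainregret} into a distribution-free $\sqrt{T}$ bound by a clean-event argument. Set $\beta:=\sqrt{2(1+2\log(D|\mathcal{A}|T))}$, so that the new inflation term is $C_i(t)=\beta/\sqrt{m_i(t)}$, which now depends on $t$ only through the sample count $m_i(t)$ (the logarithm is $t$-free). The key structural simplification relative to Theorem~\ref{thm:mainregret} is that, because $C_i(t)$ is a fixed function of $m_i(t)$, I can control the whole trajectory through a single high-probability event and then pay for the confidence radii by an elementary $\sum s^{-1/2}$ estimate.

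First I would fix the clean event $\mathcal{G}$ on which, for every arm $i\in\mathcal{N}$, every sample count $s\in\{1,\dots,T\}$, and every objective $j\in\mathcal{D}$, the empirical mean $\bar{\vec{\mu}}_{i,s}$ deviates from $\vec{\mu}_i$ by at most $\beta/\sqrt{s}$ in coordinate $j$. By the two-sided form of Lemma~\ref{lemma:hoeff}, a fixed pair $(i,s)$ violates this with probability at most $2De^{-2s(\beta/\sqrt{s})^2}=2De^{-2\beta^2}=2De^{-4}(D|\mathcal{A}|T)^{-8}$; union-bounding over the at most $NT$ pairs and using $N\leq L|\mathcal{A}|$ (each arm lies in some action and each action has $\leq L$ arms) shows $\P(\mathcal{G}^c)\leq 1/T$ with large slack. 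On $\mathcal{G}^c$ I bound each round's regret crudely by $\Delta_{\max}$, so the bad event contributes at most $\Delta_{\max}T\,\P(\mathcal{G}^c)\leq\Delta_{\max}$ to $\E[\mathrm{Reg}(T)]$, which accounts for the additive term; the $N$ initialization rounds add only a lower-order $N\Delta_{\max}$ contribution.

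The crux is the per-round inequality on $\mathcal{G}$: whenever a suboptimal $\vec{a}(t)$ is played in the main loop, $\Delta_{\vec{a}(t)}\leq 2\sum_{h\in{\cal N}_{\text{nz}}(\vec{a}(t))}a_h(t)\,C_h(t)$. This recycles the mechanism behind event $\mathcal{E}_3$. Since $\vec{a}(t)\in\Oh$, its inflated estimate is not super-dominated by that of the most-dominating SPF action $\vec{a}'(t)$, so some coordinate $j_0$ satisfies $\sum_h a_h(t)(\hat{\mu}^{(j_0)}_h+C_h(t))\geq\sum_k a'_k(t)(\hat{\mu}^{(j_0)}_k+C_k(t))$. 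On $\mathcal{G}$ the left-hand side is at most $\mu^{(j_0)}_{\vec{a}(t)}+2\sum_h a_h(t)C_h(t)$ and the right-hand side is at least $\mu^{(j_0)}_{\vec{a}'(t)}$; since $\Delta_{\vec{a}(t)}=\min_j(\mu^{(j)}_{\vec{a}'(t)}-\mu^{(j)}_{\vec{a}(t)})\leq\mu^{(j_0)}_{\vec{a}'(t)}-\mu^{(j_0)}_{\vec{a}(t)}$, the claim follows.

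Finally I would sum the per-round bound and exchange the order of summation. For each arm $h$ the radii it accrues across its selections are $\beta/\sqrt{1},\beta/\sqrt{2},\dots$, so its contribution is at most $a_{\max}\beta\sum_{s=1}^{m_h(T)}s^{-1/2}\leq 2a_{\max}\beta\sqrt{m_h(T)}$, using $\sum_{s=1}^{n}s^{-1/2}\leq2\sqrt{n}$. Bounding $m_h(T)\leq T$, taking $a_{\max}=1$, and summing over the at most $L|\mathcal{A}|$ arms gives a main-loop contribution of $2\cdot 2|\mathcal{A}|L\sqrt{T}\beta=4|\mathcal{A}|L\sqrt{T}\sqrt{2(1+2\log(D|\mathcal{A}|T))}$, which together with the $\Delta_{\max}$ bad-event term is the claimed bound. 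The main obstacle is the per-round inequality: it requires converting the ``not super-dominated in UCB'' membership condition for $\Oh$ into a gap bound along a single coordinate, and then observing that $\Delta_{\vec{a}(t)}$, being a minimum over coordinates, is itself controlled by that one coordinate; once the confidence radius is made $t$-independent, the remaining summation is the standard $\sum s^{-1/2}$ counting rather than the series $\sum t^{-2(L+1)}$ used in Theorem~\ref{thm:mainregret}.
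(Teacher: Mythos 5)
Your proposal is correct and follows the same architecture as the paper's own proof: a clean/uncertain-event split, charging the bad event via $T\Delta_{\max}\Pr(\mathcal{G}^c)\leq\Delta_{\max}$, a per-round bound $\Delta_{\vec{a}(t)}\leq 2\sum_{h}a_h(t)C_h(t)$ obtained from membership of $\vec{a}(t)$ in the estimated SPF together with the fact that the PSG is a minimum over coordinates, and then a $\sum_{s\leq n}s^{-1/2}\leq 2\sqrt{n}$ summation. The substantive differences are in two sub-steps. For concentration, the paper invokes a self-normalized sub-Gaussian lemma (cited from the Abbasi-Yadkori et al.\ and Russo--Van Roy line of work) with $\delta=1/(TD|\mathcal{A}|)$, which is time-uniform on its own; you instead reuse the paper's Lemma~\ref{lemma:hoeff} and union bound over all $NT$ arm/sample-count pairs. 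Your route is more elementary and self-contained, and it sidesteps a blemish in the paper: the cited lemma's radius contains $\log\big(\sqrt{1+m_i(t)}/\delta\big)$, which is strictly larger than the $\log(D|\mathcal{A}|T)$ appearing in the theorem's stated $C_i(t)$, whereas your Hoeffding-based radius matches $C_i(t)$ exactly. For the final bookkeeping, the paper groups rounds by action, using that $m_i(t)$ is at least the number of prior selections of that action to get $\sum_{\vec{a}}2L\sqrt{N_{\vec{a}}(T)}\leq 2L|\mathcal{A}|\sqrt{T}$; you group by arm and invoke $N\leq L|\mathcal{A}|$ --- equivalent accounting yielding the same constant. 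Both write-ups also share the same loose ends: the $N$ initialization rounds, in which $\vec{a}(t)$ is not drawn from the estimated SPF so the per-round inequality does not apply, are not covered by the stated bound (you at least flag the resulting $N\Delta_{\max}$ term; the paper silently ignores it), and both implicitly take $a_{\max}=1$ when bounding the gap by the unweighted confidence widths $2C_i(t)$.
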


\begin{proof}
We define the lower and upper confidence bounds of an the $i^{th}$ arm as $$\vec{L}_i(t) = \hat{\vec{\mu}}_i(t) - C_i(t), \>\>\> \vec{U}_i(t) = \hat{\vec{\mu}}_i(t) + C_i(t).$$
Let $$\text{UC}_i^{(d)}(t) = \bigcup_{t=1}^{m_i(t)} \Big( \hat{{\mu}}_i^{(d)} \not \in [L^{(d)}_i(t), U_i^{(d)}(t)] \Big),$$
$\mathrm{UC}_i = \cup_{d=1}^D \mathrm{UC}_i^{(d)}$ and $\mathrm{UC} = \cup_{i=1}^N \mathrm{UC}_i$ denote what we call an uncertain event, where the mean estimates of an arm falls outside the interval of upper and lower confidence bounds. Then, we can condition our regret on this event as the following.
\begin{align}\label{partition}
&\E [\text{Reg}(T)] \notag \\ &\hspace{1em}=\E[\text{Reg}(T)| \text{UC}] \P(\text{UC}) + \E[\text{Reg}(T)| \text{\text{UC}}^c] \P(\text{UC}^c) \notag \\ &\hspace{1em}\leq T \Delta_{max} \P(\text{UC}) + \E[\text{Reg}(T)| \text{\text{UC}}^c]
\end{align}
We now proceed by bounding $\P(\text{UC})$ using the following lemma.
\begin{lemma}[\cite{abbasi, russo}]
Suppose that $\forall i \in \{1,2,..,N\}$ and $\forall d \in \{1,2,...,D\}$, the difference $X_i^{(d)}(t) - \mu_i^{(d)}$ is conditionally 1-sub-Gaussian. Then, $\forall \delta > 0$, the following holds $\forall t \in \mathcal{N}$ with at least $1-\delta$ probability.
\begin{align}\label{concentration}
| \hat{\vec{\mu}}_{i}(t) - \vec{\mu}_{i} | \leq \sqrt{\frac{2}{m_i(t)}\Bigg(1+2\log \Bigg( \frac{\sqrt{1+m_i(t)}}{\delta} \Bigg) \Bigg)}.
\end{align}
\end{lemma}

By setting $\delta = \frac{1}{TD|\mathcal{A}|}$, we obtain the following.
\begin{align}\label{unconfident}
\P (\text{UC}^{(d)}_i) \leq \frac{1}{TD|\mathcal{A}|} \notag \\
\P (\text{UC}_i) \leq \frac{1}{T|\mathcal{A}|} \notag \\
\P (\text{UC}) \leq \frac{1}{T}.
\end{align}

Now, we will bound $\E[\text{Reg}(T)|\text{UC}^c]$. Let $\vec{a}(t)$ denote the arm selected at time $t$ and let $\O_{\vec{a}(t)}$ denote the subset of SPF that super dominates $\vec{a}(t)$. If $\vec{a}(t) \not \in \O$, then this set is non-empty. Next, we define $\vec{a}'(t)$ as the action that dominates $\vec{a}(t)$ the most, which is given as
\begin{numcases}{\vec{a}'(t)=}
  \argmax_{\vec{a}^\ast \in \O_{\vec{a}(t)}} 
  \bigg\{ \min_{1 \leq j \leq D} \big\{ \mu_{\vec{a^\ast}}^{(j)} - \mu_{\vec{a}(t)}^{(j)} \big\} \bigg\}, & \hspace{-1.5em}if $\vec{a}(t) \not\in \O$\nonumber \\
  \hspace{12.5em}\vec{a}(t), &\hspace{-1.9em} otherwise. \nonumber
\end{numcases}

Similarly, we define $$d_{\vec{a}(t)} = \argmin_{1 \leq d \leq D} \bigg\{ \mu_{\vec{a}(t)}^{(d)} - \mu_{\vec{a}'(t)}^{(d)} \bigg\}.$$

Let $\mathcal{T}_{\vec{a}}(T)$ denote the set of time indices when the action $\vec{a}$ is chosen until time $T$ and $N_{\vec{a}}(T) = |\mathcal{T}_{\vec{a}}(T)|$. In addition, let $B_{T} = 2\sqrt{2(1 + 2\log(D|\mathcal{A}|T))}$. Then, we have
\begin{align*}
&\E[\text{Reg}(T) | \text{UC}^c ] = \sum_{\vec{a} \in \mathcal{A}} \sum_{t \in \mathcal{T}_{\vec{a}}(T)} (\mu_{\vec{a}'}^{(d_{\vec{a}})} - \mu_{\vec{a}}^{(d_{\vec{a}})}) \\
&\hspace{1em}\leq \sum_{\vec{a} \in \mathcal{A}} \sum_{t \in \mathcal{T}_{\vec{a}}(T)} \Bigg( \sum_{i \in \mathcal{N}_{nz}(\vec{a})} (U_i^{(d_{\vec{a}})} - L_i^{(d_{\vec{a}})}) \Bigg) \\
&\hspace{1em}\leq \sum_{\vec{a} \in \mathcal{A}} \sum_{t \in \mathcal{T}_{\vec{a}}(T)} \Bigg( \sum_{i \in \mathcal{N}_{nz}(\vec{a})} \big( 2C_i^{(d_{\vec{a}})}(t) \big) \Bigg) \\
&\hspace{1em}\leq B_T \sum_{\vec{a} \in \mathcal{A}} \sum_{t \in \mathcal{T}_{\vec{a}}(T)} \Bigg( \sum_{i \in \mathcal{N}_{nz}(\vec{a})} \sqrt{\frac{1}{m_i(t)}} \>\> \Bigg) \\
&\hspace{1em} \leq B_T \sum_{\vec{a} \in \mathcal{A}} \sum_{t \in \mathcal{T}_{\vec{a}}(T)} \Bigg( \sum_{i \in \mathcal{N}_{nz}(\vec{a})} \sum_{k=0}^{N_{\vec{a}}-1} \sqrt{\frac{1}{1+k}} \Bigg).
\end{align*}
We use the following fact to bound $\E[\text{Reg}(T) | \text{UC}^c ]$ further. $$\sum_{k=0}^{N_{\vec{a}}(T)-1} \sqrt{\frac{1}{1+k}} \leq \int_{x=0}^{N_{\vec{a}}(t)}\frac{1}{\sqrt{x}}dx = 2\sqrt{N_{\vec{a}}(T)}$$
Then,
\begin{align}\label{confident}
&\E[\text{Reg}(T) | \text{UC}^c ] \leq 2 B_T \sum_{\vec{a} \in \mathcal{A}} \Bigg( \sum_{i \in \mathcal{N}_{nz}(\vec{a})} \sqrt{N_{\vec{a}}(T)} \Bigg) \notag \\
&\hspace{1em}\leq 4 \sqrt{2(1+2\log(D|\mathcal{A}|T))} \sum_{\vec{a} \in \mathcal{A}} L\sqrt{N_{\vec{a}}(T)} \notag \\
&\hspace{1em}\leq 4|\mathcal{A}|L\sqrt{T} \sqrt{2(1+2\log(D|\mathcal{A}|T))}
\end{align}
Then, plugging (\ref{confident}) and (\ref{unconfident}) into (\ref{partition}) finalizes the proof.
\begin{align*}
&\E [\text{Reg}(T)] \\ &\hspace{1em}=\E[\text{Reg}(T)| \text{UC}] \P(\text{UC}) + \E[\text{Reg}(T)| \text{\text{UC}}^c] \P(\text{UC}^c) \\ &\hspace{1em}\leq T \Delta_{max} \P(\text{UC}) + \E[\text{Reg}(T)| \text{\text{UC}}^c] \\
&\hspace{1em}\leq 4|\mathcal{A}|L\sqrt{T} \sqrt{2(1+2\log(D|\mathcal{A}|T))} + \Delta_{max}
\end{align*}

\end{proof}
\fi 

\section{Applications of COMO-MAB} \label{sec:app}

\subsection{Multi-User Communication} \label{sec:commproblem}

In the past MAB was used to model multi-user \cite{liu2010distributed,anandkumar2011distributed} opportunistic spectrum access, and learning of optimal transmission parameters in wireless communications \cite{gulati2014learning}.
An important aspect of multi-user communication that is overlooked in prior works is the multidimensional nature of the performance metrics of interest. For instance, applications such as real-time streaming are concerned with the metrics of end-to-end delay, achieved throughput, as well as the delivery ratio in order to achieve a good quality of service. In contrast, sensing, monitoring and control applications are more concerned with regularity (or periodicity) and freshness of its updates in order to assure stable and efficient tracking and control of its network. Therefore, it is of-interest to develop mechanisms that can tradeoff between multiple metrics that govern the performance of the networks. \dtl{In prior works (e.g. \cite{hou2009theory,li2011unified}), this problem is predominantly approached either by scalarizing the multi-dimensional performance metric by using scalarization coefficients or by using some metrics as constraints while optimizing a single dimensional objective. However, the optimal solution in such approaches depends on the values of the scalarization coefficients or the bounds on the constraints.}

Consider the service of $M$ users over $Q$ channels ($Q \geq M$), which takes place in a sequence of discrete time steps indexed by $t \in \{1,2,\ldots\}$. We define ${\cal M} := [M]$ to be the set of users and ${\cal Q} := [Q]$ to be the set of channels. The channel gain for user $i$ and channel $j$, denoted by $h^2_{i,j}$, is exponentially distributed with parameter $\lambda_{i,j}$. \rev{This distribution is unknown.} We assume that the channel gain is fixed during a time step and user $i$ can choose its transmission rate $R_{\text{tx}}$ over channel $j$ from $H$ different transmission rates at each time step. We define ${\cal H}_{i,j} := \{R_{i,j,1}, \ldots R_{i,j,H} \}$ to be the set of transmission rates that user $i$ can use over channel $i$, where $R_{i,j,k} < R_{i,j,k+1}$ for all $k \in \{1,\ldots,H-1\}$. Therefore, each arm corresponds to a particular user-channel-transmission rate assignment indexed by $(i,j,k)$ and we have $N = MQH$.

If user $i$ transmits at rate $R_{\text{tx}}$ over channel $j$, two rewards are produced: throughput and reliability. Here, throughput measures the successful average rate of communication between the transceivers, while reliability concerns the success rate of transmissions over time. These two metrics/objectives are typically in conflict in that achieving high reliability typically requires a low rate of communication. We note that this choice of multi-dimensional metrics is only one of many that can be incorporated into our general setting. For example, we can use energy consumption, service regularity, information freshness as other metrics of interest. 
%

\rev{There is a base station (learner) which acts as a central controller. The base station takes an action at each time step to decide which users will be assigned to which channels and which transmission rates will be used in that time step}. Then, each user will make a transmission in their assigned channels. At the end of the time step, the base station receives the success/failure event and the achieved throughput of the transmission.  These two parameters constitute the two dimensional performance of the action. 
It is assumed that the users are within the interference range of each other and cannot simultaneously use the same channel, and hence, the feasible channel allocations have a one-to-one matching of users to channels. 

 Each allocation is represented by $\bs{a} := [a_{i,j,k}]$, where $a_{i,j,k}$ is $1$ if user $i$ is assigned to channel $j$ and uses transmission rate $R_{i,j,k}$, and $0$ otherwise. Based on this, the set of actions is defined as
${\cal A} := \{ \bs{a}:  \sum_{j=1}^Q \sum_{k \in {\cal H}} a_{i,j,k} = 1, \forall i \in {\cal M} \text{, } 
\sum_{i=1}^M \sum_{k \in {\cal H}} a_{i,j,k}   \leq 1, \forall j \in {\cal Q} \}$.

The $2$-dimensional random reward vector of user $i$ in channel $j$ when it transmits at rate $R_{i,j,k}$ at time step $t$ is denoted by $\bs{X}_{i,j,k}(t) := [X^{(1)}_{i,j,k}(t), X^{(2)}_{i,j,k}(t)]$, where $X^{(1)}_{i,j,k}(t) \in \{0,1\}$ denotes the success (1) or failure (0) event, and $0 \leq X^{(2)}_{i,j,k}(t) \leq 1$ denotes the normalized achieved throughput of the transmission. The mean vector of arm $(i,j,k)$ is denoted by $\bs{\mu}_{(i,j,k)}$. 
Based on the definitions given above, we have $\mu^1_{(i,j,k)} = 1 - p_{\text{out}}(i,j,k)$, where
$p_{\text{out}}(i,j,k) := \Pr (\log (1 + h^2_{i,j} \text{SNR} ) < R_{i,j,k} )$
denotes the outage probability, and $\mu^2_{(i,j,k)} = R_{i,j,k} (1 - p_{\text{out}}(i,j,k)) / R_{i,j,H}$ denotes the normalized average throughput, where $\text{SNR}$ is the signal to noise ratio.

Based on this, the random reward and mean reward vectors of action $\bs{a}$ at time step $t$ are given by 
$\bs{R}_{\bs{a}}(t) = \sum_{i=1}^{M} \sum_{j=1}^Q \sum_{k=1}^{H} a_{i,j,k} \bs{X}_{i,j,k}(t)$ and
$\bs{\mu}_{\bs{a}} = \sum_{i=1}^{M} \sum_{j=1}^Q \sum_{k=1}^{H} a_{i,j,k} \bs{\mu}_{i,j,k}$,
respectively. 
The goal of the base station is to simultaneously maximize the long term reward in both objectives.

The next corollary bounds the expected regret of COMO-UCB for the multi-user multi-objective communication problem. 
\begin{corollary}
For the multi-user multi-objective communication problem, the expected regret of COMO-UCB run with the same $C_i(t)$ value in Theorem \ref{thm:mainregret} is bounded by
$\mathbb{E} [\mathrm{Reg}(T)] \leq \Delta_{\max}\Big( \frac{4 M^3 Q H (M+1)\log(T\sqrt[4]{2})}{\Delta_{\min}^2} + MQH + \frac{\pi^2}{3} M^2 Q H \Big)$.
\end{corollary}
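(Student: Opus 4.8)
The plan is to recognize the multi-user communication setup as a concrete instance of the general COMO-MAB model, so that Theorem~\ref{thm:mainregret} applies once the problem-specific constants are identified. First I would verify that the action-reward structure matches the abstract formulation: each arm is a user-channel-rate triple $(i,j,k)$, the random reward vector $\bs{X}_{i,j,k}(t)$ lives in $[0,1]^2$, and the action reward $\bs{R}_{\bs{a}}(t) = \sum_{i,j,k} a_{i,j,k}\bs{X}_{i,j,k}(t)$ is exactly the linear combination of arm rewards required by COMO-MAB. Since the learner observes the component reward vectors of the active arms at the end of each step, the feedback model is identical as well, so the hypotheses of Theorem~\ref{thm:mainregret} are met and the bound transfers directly.

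The core of the argument is then to read off the four constants $N$, $L$, $a_{\max}$, and $D$ for this instance. Since arms are indexed by $(i,j,k)$ with $i \in {\cal M}$, $j \in {\cal Q}$, and $k \in \{1,\dots,H\}$, the number of arms is $N = MQH$. The two objectives are throughput and reliability, so $D = 2$. Because each $a_{i,j,k}$ is a $\{0,1\}$ indicator in the definition of ${\cal A}$, we have $a_{\max} = 1$. The step requiring the most care is computing $L$: the feasibility constraints $\sum_{j,k} a_{i,j,k} = 1$ for every user and $\sum_{i,k} a_{i,j,k} \leq 1$ for every channel force a one-to-one matching of the $M$ users to channels, with each user assigned exactly one transmission rate, so every feasible action activates precisely $M$ arms, giving $L = M$.

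Substituting $N = MQH$, $L = M$, $a_{\max} = 1$, and $D = 2$ into the bound of Theorem~\ref{thm:mainregret} then yields the claim term by term: the leading term $4a_{\max}^2 N L^2 (L+1)\log(T\sqrt[4]{D})/\Delta_{\min}^2$ becomes $4 M^3 Q H (M+1)\log(T\sqrt[4]{2})/\Delta_{\min}^2$, the additive $N$ becomes $MQH$, and $(\pi^2/3)NL$ becomes $(\pi^2/3) M^2 Q H$. The only genuine obstacle is the determination $L = M$, since it rests on correctly interpreting the matching constraint in the definition of ${\cal A}$; once that is settled, the corollary follows by mechanical substitution, with no new probabilistic estimates needed.
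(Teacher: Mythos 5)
Your proposal is correct and matches the paper's own argument: the paper likewise obtains the corollary by instantiating Theorem~\ref{thm:mainregret} with $N = MQH$, $L = M$ (since the matching constraints force each of the $M$ users to activate exactly one $(i,j,k)$ arm), $a_{\max} = 1$, and $D = 2$. The term-by-term substitution you carry out is exactly the intended proof, with no additional probabilistic work required.
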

The corollary above shows that the regret of COMO-UCB is a polynomial function of $M$, $Q$ and $H$. As an alternative to COMO-UCB, one could have used the multi-objective learning algorithm developed in \citet{drugan13} by treating each action as a separate arm. The regret of this algorithm grows linearly in the number of actions that are not in the Pareto front, and it requires to hold and update the sample mean reward estimates for all the actions.  Since the cardinality of the action space in this case is $|{\cal A}| = H^M Q (Q-1) \ldots (Q-M+1)$, this algorithm is inefficient both in terms of the regret and the memory complexity for the multi-user multi-objective communication problem that we consider in this paper.

\dtl{$a_{\max} = 1$, $N = MQH$, $L = M$ since each of the $M$ users select only one configuration, $D=2$.}

\begin{figure*}
	\begin{minipage}{\textwidth}
		\begin{adjustbox}{max width=\textwidth}
			\begin{minipage}{\columnwidth}
				\centering
				\includegraphics[width=0.98\columnwidth]{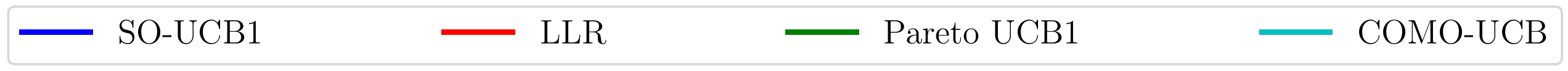}
			\end{minipage}%
		\end{adjustbox}
	\end{minipage}
	\begin{minipage}{\textwidth}
		\begin{adjustbox}{max width=\textwidth}
			\begin{minipage}{0.66\columnwidth}
				\centering
				\includegraphics[width=\columnwidth]{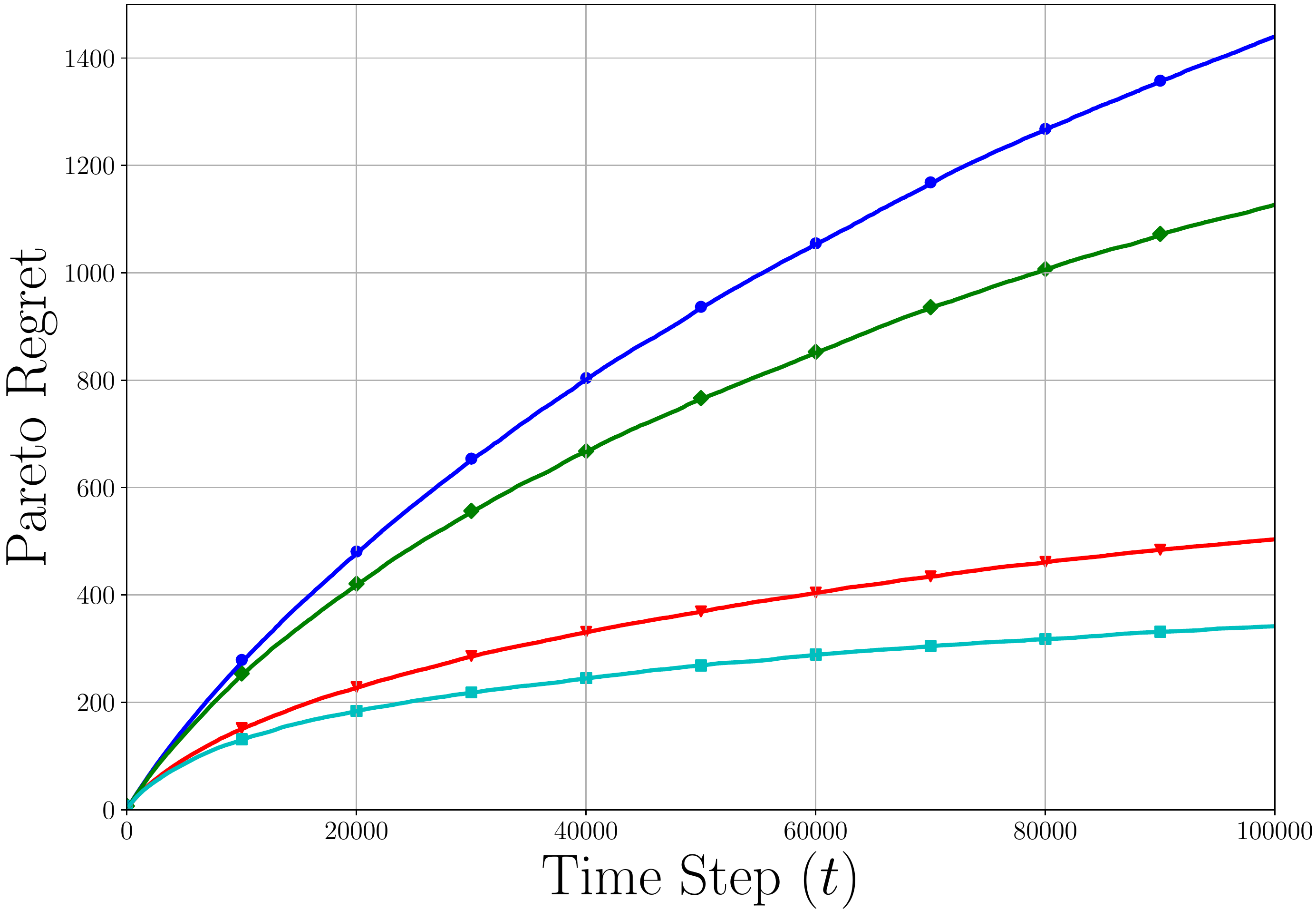}
				\caption{Pareto Regret Comparison}\label{regret}
			\end{minipage}%
			\begin{minipage}{0.66\columnwidth}
				\centering
				\includegraphics[width=\columnwidth]{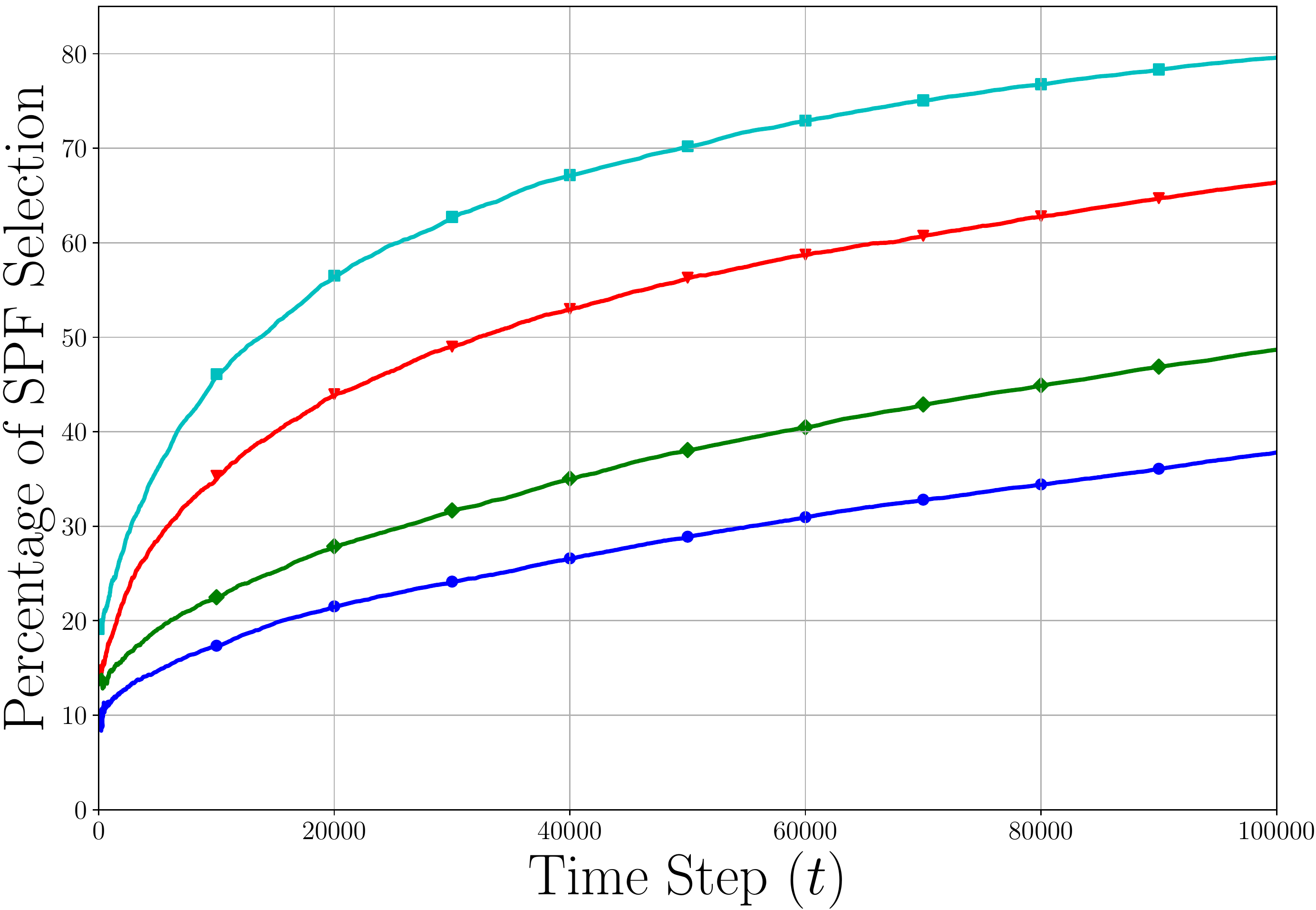}
				\caption{Percentage of SPF Selections}\label{percent}
			\end{minipage}
			\begin{minipage}{0.62\columnwidth}
				\centering
				\includegraphics[width=\columnwidth]{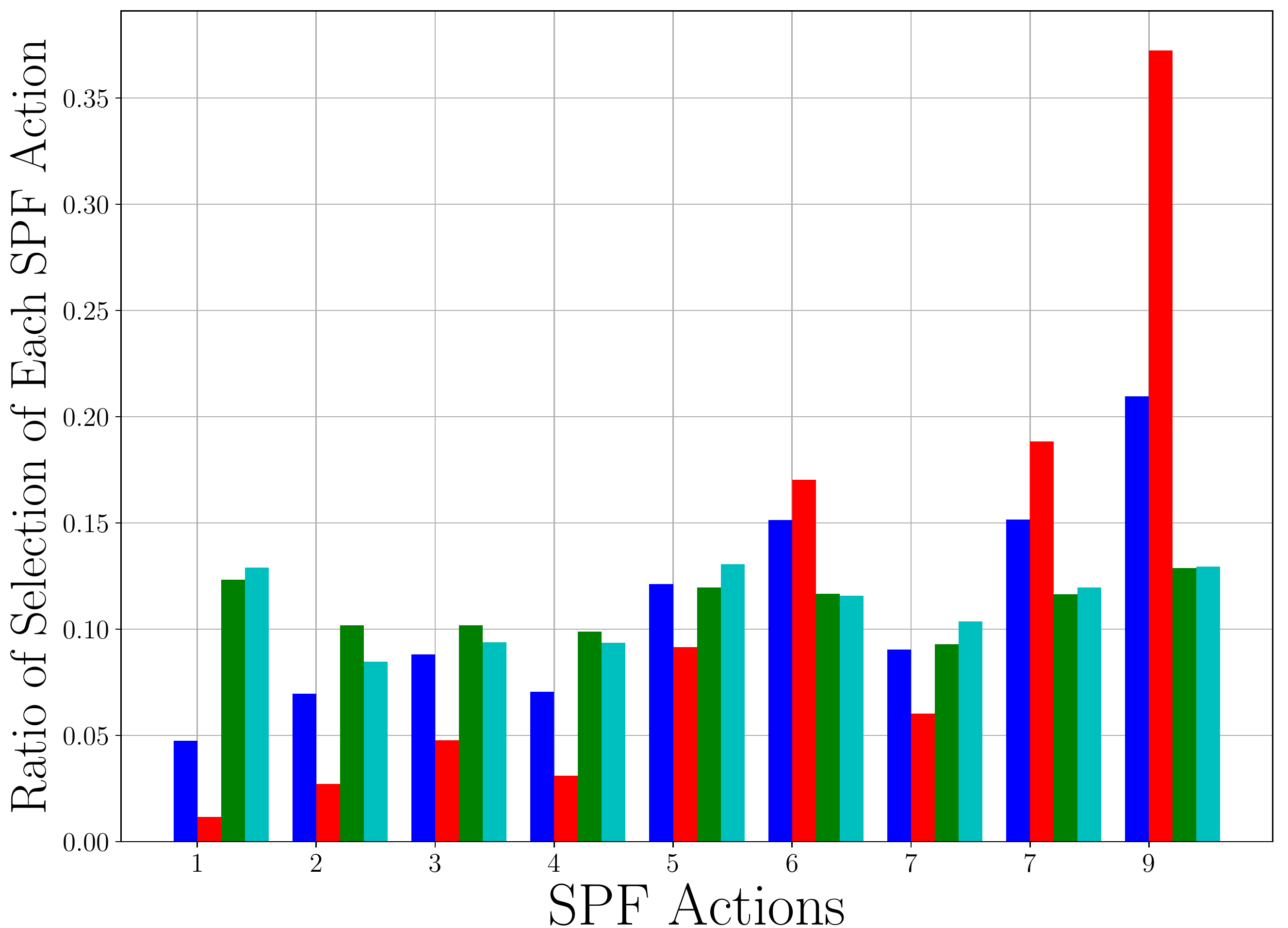}
				\caption{Fairness Comparison}\label{fair}
			\end{minipage}
		\end{adjustbox}
	\end{minipage}
\end{figure*}

\subsection{Recommender System}
Recommender systems involve optimization of multiple metrics like novelty and diversity \cite{rec1, rec2} in addition to average rating. Below, we describe how a recommender system with average rating and diversity metrics can be modeled using COMO-MAB.

Consider a recommender system recommending $K$ out of $N$ items to $M$ similar users that arrive at each time step, which have the same observable context $x_o$.\footnote{In general, a different instance of COMO-UCB can be run for each set of similar users.} Let $\bs{U}_j(t) = [U_{j1}(t), \ldots, U_{jN}(t)]$ denote the rating vector of user $j$, where $U_{ji}(t) =1$ if user $j$ likes item $i$ and $0$ otherwise. The distribution of $\bs{U}_j(t)$ is given as $p_{j}(x_j,x_o)$ where $x_j$ is the hidden context of user $j$, which is drawn from a fixed distribution defined over a context set ${\cal X}_{x_o}$ independently from the other users. Neither $x_j$ nor $p_{j}(x_j,x_o)$ is known by the recommender system. 

The recommendations are represented by $\bs{a}$ where $a_{i} =1$ if item $i$ is recommended and $0$ otherwise. Thus, the set of actions is given as
${\cal A} = \{ \bs{a}: a_{i} \in \{0,1\}, \forall i \in {\cal N} \text{ and } \sum_{i=1}^{N} a_{i} = K \}$. The random reward vector is $2$-dimensional $\bs{X}_{i}(t) = [X^{(1)}_{i}(t),X^{(2)}_{i}(t)]$. Here, $X^{(1)}_{i}(t)$ is the average number of users that liked item $i$ and $X^{(2)}_{i}(t)$ denotes the cosine diversity of users that liked item $i$, which is given as $\sum_{j \neq l} c_{j,l} / (M (M-1))$, where $c_{j,l} = 1 - \tilde{\bs{U}}_j(t) \tilde{\bs{U}}^{T}_l(t) / (|| \tilde{\bs{U}}_j(t) || || \tilde{\bs{U}}_l(t) ||) $ and $\tilde{\bs{U}}_j(t)$ is the $1$ by $K$ vector that consists of entries of $\bs{U}_j(t)$ that correspond to the recommended items. As an alternative, $X^{(2)}_{i}(t)$ can also represent the sample variance of the ratings of the users for item $i$. 
The next corollary bounds the expected regret of COMO-UCB for the above recommendation problem. 
\begin{corollary}
	For the recommender system, the expected regret of COMO-UCB run with the same $C_i(t)$ value in Theorem \ref{thm:mainregret} is bounded by
	$\mathbb{E} [\mathrm{Reg}(T)] \leq \Delta_{\max}\Big( \frac{4 N K^2 (K+1) \log(T\sqrt[4]{2})}{\Delta_{\min}^2} + N + \frac{\pi^2}{3} N K \Big)$.
\end{corollary}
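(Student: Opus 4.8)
The plan is to obtain this bound as an immediate specialization of Theorem~\ref{thm:mainregret}, since the recommender system described above is just a particular instance of the general COMO-MAB model of Section~\ref{sec:problem}. The entire argument consists of reading off the correct values of the four problem parameters $N$, $L$, $a_{\max}$ and $D$, substituting them into the general regret bound, and checking that the hypotheses under which Theorem~\ref{thm:mainregret} was established are satisfied.

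First I would identify the parameters. There is one arm per item, so the number of arms is $N$. Each action is a binary indicator vector with $a_i \in \{0,1\}$, hence $a_{\max} = \max_{\bs{a} \in \mathcal{A}} \max_i a_i = 1$ and the factor $a_{\max}^2$ in the leading term equals $1$. Every feasible action satisfies $\sum_{i=1}^N a_i = K$, so each action contains exactly $K$ arms with nonzero weight; therefore $L = \max_{\bs{a} \in \mathcal{A}} |\mathcal{N}_{\text{nz}}(\bs{a})| = K$. Finally the reward vector has two coordinates (average rating and cosine diversity), so $D = 2$ and $\sqrt[4]{D} = \sqrt[4]{2}$.

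Substituting $a_{\max} = 1$, $L = K$ and $D = 2$ into the bound of Theorem~\ref{thm:mainregret} is then mechanical: the leading term $4 a_{\max}^2 N L^2 (L+1) \log(T \sqrt[4]{D}) / \Delta_{\min}^2$ becomes $4 N K^2 (K+1) \log(T \sqrt[4]{2}) / \Delta_{\min}^2$, the additive $N$ is unchanged, and the term $\tfrac{\pi^2}{3} N L$ becomes $\tfrac{\pi^2}{3} N K$, which reproduces the claimed bound exactly.

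The only step requiring genuine attention---and the main obstacle, modest as it is---is verifying that the recommender instance meets the modeling assumptions of Theorem~\ref{thm:mainregret}: namely, that each arm reward vector is supported on $[0,1]^D$, is independent across time steps, and that the action reward is the linear combination $\sum_i a_i \bs{X}_i(t)$. I would confirm that both coordinates stay in $[0,1]$, the normalized like-count $X^{(1)}_i(t)$ being a fraction of users, and the diversity coordinate $X^{(2)}_i(t) = \sum_{j \neq l} c_{j,l} / (M(M-1))$ being an average of cosine-diversity terms $c_{j,l} \in [0,1]$ (the rating vectors are non-negative, so each cosine similarity lies in $[0,1]$). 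Once these checks are in place, the corollary follows with no further probabilistic argument.
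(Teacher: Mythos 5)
Your proposal is correct and matches the paper's own (implicit) argument exactly: the corollary is obtained by specializing Theorem~\ref{thm:mainregret} with $a_{\max}=1$, $L=K$, $N$ arms and $D=2$, which is precisely the substitution you carry out. Your additional check that the reward coordinates lie in $[0,1]$ and that the action reward is the linear combination $\sum_i a_i \bs{X}_i(t)$ is sound and, if anything, more careful than what the paper records.
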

Note that a learning algorithm that does not exploit the combinatorial nature of this problem will incur regret proportional to $\binom{N}{K} \log T$.

\dtl{$a_{\max}=1$, $L=K$, $|{\cal N}| = N$, $D=2$.}

\subsection{Network Routing} Packet routing in a communication network commonly involves multiple paths that can be modeled as combinatorial selections of edges of a given graph. Adaptive packet routing can improve the performance by avoiding congested and faulty links. In many networking problems, it is desirable to minimize energy consumption as well as the delay due to the energy constraints of Internet of Things devices and sensor nodes.

Given a source destination pair $(s,d)$, we can formulate routing of the flow from node $s$ to node $d$ as a COMO-MAB problem. Let $-X_{(l,k)}^{(1)}(t)$ and $-X_{(l,k)}^{(2)}(t)$ denote the random delay and energy consumption incurred on the edge between nodes $l$ and $k$, respectively.\footnote{These can also be normalized to lie in the unit interval.} The action set is the set of paths connecting $s$ to $d$, and each action is a path from $s$ to $d$. Thus, the learner observes all the rewards in edges $(l,k) \in \vec{a}(t)$, and collects reward $X^{(j)}(t) = \sum_{(l,k) \in \vec{a}(t)} X_{(l,k)}^{(j)}(t)\text{ for }j=1,2.$
The next corollary bounds the expected regret of COMO-UCB for this problem. 
\begin{corollary}
	For network routing, the expected regret of COMO-UCB run with the same $C_i(t)$ value in Theorem \ref{thm:mainregret} is bounded by
	$\mathbb{E} [\mathrm{Reg}(T)] \leq \Delta_{\max}\Big( \frac{4 N L^2 (L+1) \log(T\sqrt[4]{2})}{\Delta_{\min}^2} + N + \frac{\pi^2}{3} N L \Big)$,
where $L$ is the length of the longest acyclic path from $s$ to $d$.
\end{corollary}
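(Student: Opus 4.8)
The plan is to obtain this bound as a direct specialization of Theorem~\ref{thm:mainregret}, by casting the network routing problem as a concrete instance of COMO-MAB and reading off the corresponding parameter values. First I would set up the correspondence between the two problems: the arms of the COMO-MAB instance are the edges of the graph, so $N$ equals the number of edges; an action $\vec{a}$ is a path from $s$ to $d$, with $a_{(l,k)} = 1$ precisely when edge $(l,k)$ lies on the path and $0$ otherwise. Under this identification, the action reward $X^{(j)}(t) = \sum_{(l,k) \in \vec{a}(t)} X^{(j)}_{(l,k)}(t)$ for $j=1,2$ is exactly the linear combination $\sum_i a_i \vec{X}_i(t)$ required by the model, and the number of objectives is $D=2$ (delay and energy consumption).

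Next I would verify that the two modeling hypotheses of Theorem~\ref{thm:mainregret} hold. The support requirement $[0,1]^D$ is met after normalizing the per-edge delay and energy costs to the unit interval (as noted in the footnote of the problem setup), and the temporal independence of the edge reward vectors is assumed throughout. With the correspondence in place, the two structural constants simplify: since each edge appears at most once in a path and always with unit weight, every nonzero $a_i$ equals $1$, hence $a_{\max} = \max_{\vec{a}} \max_q a_q = 1$. Likewise, the maximum number of arms in an action, $L = \max_{\vec{a}} |\mathcal{N}_{\text{nz}}(\vec{a})|$, is the largest number of edges on any feasible path, i.e.\ the length of the longest acyclic path from $s$ to $d$.

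Finally I would substitute $a_{\max} = 1$ and $D = 2$ into the bound of Theorem~\ref{thm:mainregret}. The factor $a_{\max}^2$ in the leading term collapses to $1$, and $\sqrt[4]{D} = \sqrt[4]{2}$ inside the logarithm (both in the regret bound and in the inflation term $C_i(t)$), yielding exactly
\begin{align*}
\mathbb{E}[\mathrm{Reg}(T)] \leq \Delta_{\max}\Big( \tfrac{4 N L^2 (L+1) \log(T\sqrt[4]{2})}{\Delta_{\min}^2} + N + \tfrac{\pi^2}{3} N L \Big),
\end{align*}
as claimed. Since the derivation is a pure specialization of the master bound, there is no genuine analytic obstacle; the only point requiring care is the correct reading of the two problem-dependent constants — confirming that $a_{\max}=1$ follows from the unit edge weights in the path-indicator representation, and that $L$ coincides with the longest acyclic path length rather than, say, the number of edges $N$ or the number of nodes.
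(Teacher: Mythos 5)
Your proposal is correct and matches the paper's (implicit) argument exactly: the corollary is obtained by specializing Theorem~\ref{thm:mainregret} to the path-indicator formulation with $a_{\max}=1$, $D=2$, $N$ the number of edges, and $L$ the length of the longest acyclic path from $s$ to $d$. The paper provides no separate proof beyond this substitution, so your derivation — including the normalization of delay/energy rewards to $[0,1]^2$ and the identification of the two structural constants — is the intended one.
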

Similar to the previous section, a learning algorithm that treats each path as an arm incurs regret proportional to the number of paths from $s$ to $d$.

\vspace{-0.1in}
\section{Numerical Results}\label{sec:numerical}
We consider the multi-user communication problem given in Section \ref{sec:commproblem}, where $M=2$, $Q=4$ and $H=3$. In this case, the actions are $24$-dimensional (represented by a $2$ by $4$ by $3$ matrix), and the total number of actions is equal to $12 \times 9 = 108$. $\lambda_{i,j}s$ are selected randomly from the interval $[0.05, 0.2]$, and are set as: 
\begin{align*}
[\lambda_{i,j} ] =
\begin{bmatrix}
0.14 & 0.14 & 0.16 & 0.05 \\
0.05 & 0.11 & 0.13 & 0.07 \\
\end{bmatrix}
\end{align*}
In addition, SNR is taken to be $1$ and $R_{i,j,1} = R_{i,j}/4$, $R_{i,j,2} = R_{i,j}/2$ and $R_{i,j,3} = R_{i,j}$, where $R_{i,j} := \text{ProductLog}[15\lambda_{i,j}]$. In this setup, SPF $=$ Pareto front and $9$ out of $108$ actions are in SPF. The time horizon $T$ is taken as $10^5$ and all reported results are averaged over $5$ runs.

Figure \ref{regret} shows the regrets of COMO-UCB and the competitor algorithms Pareto UCB1 from \citet{drugan13}, Learning with Linear Rewards (LLR) from \citet{gai} and Single Objective UCB1 (SO-UCB1), which is the same as UCB1 in \citet{ucb}, as a function of $t$. Pareto UCB1 treats each action as a separate arm, and at each time step only updates the parameters of the selected action. Moreover, it also takes as input the size of the Pareto front, which is not required by COMO-UCB. LLR is a combinatorial algorithm that works with a scalar reward. Instead of calculating the Pareto front, it aims at selecting the action that maximizes the reward in the first objective. On the other hand, SO-UCB1 treats each action as a separate arm and tries to maximize the reward in the first objective. It can be seen from Figure \ref{regret} that \dtl{the regret of COMO-UCB is nearly one fourth of the regret of SO-UCB, one third of the regret of Pareto UCB1 and 1.5 times lower than the regret of LLR} the regret incurred by COMO-UCB grows significantly slower than the other algorithms. This is due to the fact that COMO-UCB finds the SPF much faster than other algorithms by exploiting the dependence between the actions and by keeping track of the rewards in both objectives.

Figure \ref{percent} reports the fraction of times an action from the Pareto front is selected as a function of $t$. 
At the end of $10^5$ time steps, COMO-UCB selects an action from the Pareto front $79\%$ of the time, while SO-UCB1 selects an action from the Pareto front only $37\%$ of the time, Pareto UCB1 selects $48\%$ of the time and LLR selects $67\%$ of the time.

We also compare the algorithms in terms of their fairness. In Figure \ref{fair}, we used a bar chart to represent the fraction of times that each one of the $9$ actions in the SPF is selected during the time steps in which an action from the SPF is selected by the algorithms. We say that an algorithm is fair if these fractions are close for all $9$ arms. 
We observe that fairness of LLR is much worse than fairness of COMO-UCB, even though LLR is the closest competitor to COMO-UCB in terms of the Pareto regret. We conclude that COMO-UCB and Pareto UCB1, which select actions from the SPF uniformly at random are fair. However, SO-UCB1 and LLR selects the $9$th action in the SPF significantly more than other actions in the SPF. This is expected, since these algorithms aim to maximize only the reward in the first objective.
\vspace{-0.1in}
\section{Conclusion}\label{sec:conc}
We proposed a new MAB model, called COMO-MAB, that combines combinatorial bandits with multi-objective online learning, and designed a learning algorithm that achieves $O(N L^3 \log T)$ Pareto regret. We showed that COMO-MAB can be used to model various multi-objective problems in multi-user communication, recommender systems and network routing. Then, we validated the effectiveness of the proposed algorithm through simulations in a multi-user communication problem. 

\section*{Acknowledgement}
The work of C. Tekin was supported by the Scientific and Technological Research
Council of Turkey (TUBITAK) under 3501 Program Grant No. 116E229.

\vspace{-0.2in}

\bibliography{references}
\bibliographystyle{icml2018}

\end{document}